\documentclass[preprint,3p,times]{elsarticle}
\RequirePackage{multirow,booktabs,subfigure,color,array,hhline,makecell}
\usepackage{amssymb}
\usepackage{amsmath}
\usepackage{graphicx}
\usepackage{amsthm}
\usepackage{mathrsfs}
\usepackage{indentfirst}
\usepackage[colorlinks,citecolor=blue,urlcolor=blue]{hyperref}
\allowdisplaybreaks
\usepackage[table]{xcolor}
\usepackage{tikz-network}
\usepackage{pgf}
\usepackage{tikz}
\usepackage{subfigure}
\usetikzlibrary{arrows, decorations.pathmorphing, backgrounds, positioning, fit, petri, automata}
\usetikzlibrary{shadows,arrows,positioning}
\RequirePackage{algorithm,algpseudocode}
\allowdisplaybreaks
\usepackage{changes}

\usepackage{setspace}
\newtheorem{thm}{Theorem}
\newtheorem{defin}{Definition}
\newtheorem{lem}{Lemma}

\newtheorem{cor}{Corollary}

\newtheorem{prop}{Proposition}

\allowdisplaybreaks[4]

\AtBeginDocument{%
	\providecommand\BibTeX{{%
			\normalfont B\kern-0.5em{\scshape i\kern-0.25em b}\kern-0.8em\TeX}}}
\journal{~}

\begin{document}
\begin{frontmatter}
\title{Mixed Membership sub-Gaussian Models}
\author[label1]{Huan Qing\corref{cor1}}
\ead{qinghuan@u.nus.edu~\&~qinghuan@cqut.edu.cn}
\cortext[cor1]{Corresponding author.}
\address[label1]{School of Economics and Finance, Chongqing University of Technology, Chongqing, 400054, China}

\begin{abstract}
The Gaussian mixture model is widely used in unsupervised learning, owing to its simplicity and interpretability. However, a fundamental limitation of the classical Gaussian mixture model is that it forces each observation to belong to exactly one component. In many practical applications, such as genetics, social network analysis, and text mining, an observation may naturally belong to multiple components or exhibit partial membership in several latent components. To overcome this limitation, we propose the mixed membership sub‑Gaussian model, which extends the classical Gaussian mixture framework by allowing each observation to belong to multiple components. This model inherits the interpretability of the classical Gaussian mixture model while offering greater flexibility for capturing complex overlapping structures. We develop an efficient spectral algorithm to estimate the mixed membership of each individual observation, and under mild separation conditions on the component centres, we prove that the estimation error of the per‑individual membership vector can be made arbitrarily small with high probability. To our knowledge, this is the first work to provide a computationally efficient estimator with such a vanishing‑error guarantee for a mixed‑membership extension of the Gaussian mixture model. Extensive experimental studies demonstrate that our method outperforms existing approaches that ignore mixed memberships.
\end{abstract}

\begin{keyword}
Mixed membership \sep sub-Gaussian mixture model \sep spectral method 
\end{keyword}
\end{frontmatter}

\section{Introduction}\label{sec:intro}
Clustering is a fundamental task in unsupervised learning. Its goal is to partition unlabelled data into meaningful groups, thereby revealing hidden structures that are not directly observable  \citep{ng2001spectral,von2007tutorial}. The importance of clustering extends to many areas of modern data analysis. In genomics, clustering helps identify cancer subtypes from gene expression profiles \citep{jain2010data}. In marketing, it segments customers based on purchasing behaviour. In social network analysis, it uncovers communities of individuals with shared interests \citep{fortunato2016community}. In medical imaging, clustering aids in disease diagnosis and tissue segmentation. In recommender systems, it groups users with similar preferences to improve personalised suggestions. In satellite image analysis, it classifies pixels into land cover categories. In text mining, it groups documents by topic. In speech processing, it separates speakers. These diverse applications show why clustering has remained a central focus in statistics and machine learning for decades.

Among the many clustering models, the Gaussian mixture model (GMM) is a cornerstone of unsupervised learning. Its core assumption is that each observation is generated from one of several latent Gaussian components, with the component membership unknown. This simple yet powerful framework has a long history, dating back to the work of \citep{pearson1894iii} on mixture distributions. The GMM's flexibility and interpretability have made it a standard tool for density estimation and cluster analysis.

Classical estimation methods for GMMs include the expectation‑maximization (EM) algorithm \citep{dempster1977maximum} and the \(k\)-means algorithm \citep{lloyd1982least, mcqueen1967some}. In fact, \(k\)-means can be obtained as a limiting case of EM when the component covariances are taken to be equal and isotropic, and the common variance tends to zero. Building on these foundations, a rich body of literature has developed sharp statistical guarantees for clustering under GMMs. \citep{loffler2021optimality} proved that spectral clustering achieves the optimal mislabelling rate, matching the information‑theoretic lower bound. \citep{ndaoud2022sharp} gave a complete characterization of the exact recovery threshold for the two‑component Gaussian mixture, revealing a sharp phase transition. \citep{chen2021cutoff} extended this analysis to multiple components, pinpointing the critical signal‑to‑noise ratio for exact recovery. \citep{li2025exact} further studied the exact recovery problem for \(k\)-component Gaussian mixtures, providing both necessary and sufficient conditions. Algorithmic aspects have also been rigorously investigated. \citep{lu2016statistical} established statistical and computational guarantees for Lloyd's algorithm, showing that with a suitable initialisation it attains the optimal rate. \citep{balakrishnan2017statistical} analysed the EM algorithm from a population-to-sample-based perspective, elucidating its convergence behaviour. \citep{srivastava2023robust} proposed a robust spectral clustering algorithm for sub‑Gaussian mixture models in the presence of outliers. \citep{chen2024achieving} achieved optimal clustering in Gaussian mixtures with anisotropic covariance structures. \citep{jana2025adversarially} established optimality guarantees for adversarially robust clustering. Together, these works have profoundly advanced our understanding of when and how GMMs can be consistently estimated.

A fundamental limitation of the classical Gaussian mixture model is its single‑membership assumption: each observation is assumed to arise from exactly one latent component. In many real‑world applications, however, this assumption is overly restrictive. Consider a social network, where an individual may simultaneously belong to several communities, such as a family, a circle of colleagues, and a sports club. A document can naturally cover multiple topics, like politics and economics. A gene may participate in several biological pathways.  In all these cases, a binary assignment to a single group is inadequate. A more adequate representation should allow each observation to belong to multiple components at the same time.

The classical GMM and the rich body of work built upon it, including the expectation‑maximization algorithm \citep{dempster1977maximum} and sharp recovery results \citep{ndaoud2022sharp, chen2021cutoff, li2025exact}, are all designed for the single‑membership setting. Consequently, they cannot be used to estimate fractional memberships or to detect overlapping clusters. This limitation has long been recognised in network analysis. There, the mixed membership stochastic blockmodel (MMSB) \citep{airoldi2008mixed} generalises the classic stochastic blockmodel (SBM) \citep{holland1983stochastic} by allowing each node to belong to multiple communities with fractional intensities. This seminal work has inspired a rich line of research on computationally efficient and provably consistent estimation methods. For instance, \citep{mao2021estimating} developed an efficient spectral algorithm, which exploits the simplex geometry of eigenvectors and provides sharp row‑wise deviation bounds. \citep{jin2024mixed} proposed Mixed‑SCORE, a spectral method that uses a carefully designed ratio matrix to remove degree heterogeneity. Motivated by these advances, this paper aims to construct a similarly principled and computationally efficient mixed membership extension for the Gaussian mixture model, adapted to continuous data.

This paper introduces the mixed membership sub-Gaussian model (MMSG) to overcome the single-membership limitation of classical Gaussian mixture models. A formal definition of MMSG is given in Section~\ref{sec:model}. The MMSG extends the classical Gaussian mixture model by allowing each observation to belong to multiple components with fractional weights, and the noise is sub-Gaussian. The main contributions of this work are as follows.

\begin{itemize}
  \item We introduce the MMSG model, which naturally captures overlapping clusters and fractional memberships, making it suitable for a wide range of applications where observations can belong to multiple latent groups.
  
  \item Under the MMSG model, we develop an efficient spectral estimator for the mixed membership vectors. The method is simple to implement, does not require iterative optimization, and works in high-dimensional settings where the number of features can be much larger than the number of samples.
  
  \item We prove that, under mild separation conditions on the component centres, the individual-wise estimation error for each individual's membership vector vanishes with high probability. Our analysis accommodates high-dimensional settings and does not require the clusters to be balanced in size.
  
\item Extensive experimental studies demonstrate that our method substantially outperforms existing algorithms designed for classical Gaussian mixture models.
\end{itemize}

The remainder of the paper is organized as follows. Section~\ref{sec:model} introduces the model. Section~\ref{sec:alg} presents the algorithm. Section~\ref{sec:theory} gives the theoretical guarantees. Section~\ref{sec:experiments} provides numerical studies on synthetic data. Section~\ref{sec:realdata} applies the method to real-world datasets. Section~\ref{sec:conclusion} concludes and discusses future work.

\textbf{Notations.} For a vector \(\mathbf{v}\), let \(\|\mathbf{v}\|_q\) denote its \(\ell_q\)-norm; we drop the subscript \(q\) when \(q=2\). For any matrix \(\mathbf{M}\), \(\mathbf{M}^{\top}\) is its transpose, \(\|\mathbf{M}\|\) its spectral norm, \(\|\mathbf{M}\|_{\mathrm{F}}\) its Frobenius norm, \(\|\mathbf{M}\|_{2\to\infty}\) the maximum \(\ell_2\)-norm of its rows,  \(\max(0,\mathbf{M})\) its entrywise maximum with zero, \(\sigma_i(\mathbf{M})\) its \(i\)-th largest singular value, \(\lambda_i(\mathbf{M})\) its \(i\)-th largest eigenvalue (ordered by magnitude), \(\kappa(\mathbf{M})=\sigma_1(\mathbf{M})/\sigma_K(\mathbf{M})\) its condition number, and \(\operatorname{rank}(\mathbf{M})\) its rank. For any positive integer \(m\), \(\mathbf{I}_m\) is the \(m\times m\) identity matrix and \([m]=\{1,2,\dots,m\}\). \(\mathbf{e}_i\) is the \(i\)-th standard basis vector. \(\mathbb{E}[\cdot]\) denotes the expectation operator.
\section{Model Specification}\label{sec:model}

Let \(\mathbf{X} = [\mathbf{x}_1,\dots,\mathbf{x}_n] \in \mathbb{R}^{p\times n}\) be the observed data matrix, where \(p\ge 2\) is the dimension of each observation and \(n\ge 2\) is the number of samples (each sample is also referred to as an individual). The number of latent components is denoted by \(K\) and is assumed to be known, where $K$ satisfies \(1\le K\le \min\{p,n\}\).  To allow each individual to belong to multiple components with fractional weights, we introduce a mixed membership matrix \(\mathbf{\Pi}\in [0,1]^{n\times K}\) whose rows \(\boldsymbol{\pi}_i\) satisfy:
\[
\boldsymbol{\pi}_{ik}\ge 0,\qquad \sum_{k=1}^K \boldsymbol{\pi}_{ik}=1\qquad \forall i\in[n],\ \forall k\in[K].
\]

Thus, the \(i\)-th individual can have a distribution over the \(K\) latent groups, a flexible extension of the classical hard assignment.  We say that an individual is pure if its membership vector \(\boldsymbol{\pi}_i\) is a standard basis vector (i.e., exactly one entry equals 1, and the remaining \(K-1\) entries are 0); otherwise, the individual is mixed.  Pure individuals correspond to those who belong exclusively to a single component, while mixed individuals exhibit fractional membership across multiple components.

The component centres are collected in \(\mathbf{\Theta}=[\boldsymbol{\theta}_1,\boldsymbol{\theta}_2,\dots,\boldsymbol{\theta}_K]\in \mathbb{R}^{p\times K}\), with \(\boldsymbol{\theta}_k\in\mathbb{R}^p\) being the centre of the \(k\)-th component.  The noise matrix \(\mathbf{E}=[\boldsymbol{\epsilon}_1,\boldsymbol{\epsilon}_2,\dots,\boldsymbol{\epsilon}_n]\in \mathbb{R}^{p\times n}\) has independent entries \(\epsilon_{ij}\) that are sub‑Gaussian: there exists a constant \(\eta>0\) such that
\[
\|\boldsymbol{\epsilon}_{ij}\|_{\psi_2} := \inf\bigl\{t>0:\mathbb{E}[\exp(\boldsymbol{\epsilon}_{ij}^2/t^2)]\le 2\bigr\}\le \eta,
\]
and consequently \(\operatorname{Var}(\boldsymbol{\epsilon}_{ij})\le C_0\eta^2\) for an absolute constant \(C_0\) (for example \(C_0=2\)).

With these ingredients, we now formally define the proposed model.  The idea is simple: each observation is the sum of a weighted combination of the component centres (the weights given by the individual's membership vector) plus a sub‑Gaussian noise term.  This leads to the following definition.

\begin{defin}[Mixed Membership sub‑Gaussian Model (MMSG)]
Under the Mixed Membership sub‑Gaussian Model, the observed data matrix $\mathbf{X}$ is generated in the following way:
\begin{align}\label{DefinMMSG}
\mathbf{X} = \mathbf{\Theta}\mathbf{\Pi}^{\top} + \mathbf{E}, \qquad \mathbb{E}[\mathbf{X}] =: \mathbf{P} = \mathbf{\Theta}\mathbf{\Pi}^{\top}.
\end{align}

The matrix \(\mathbf{P}\) represents the signal part, and the goal is to recover the mixed membership matrix \(\mathbf{\Pi}\) (up to a permutation of the components) from the observed data \(\mathbf{X}\) alone.
\end{defin}

To the best of our knowledge, this model has not been studied before. It is the first to combine sub‑Gaussian noise with a full mixed membership structure.  It naturally captures overlapping clusters, a feature absent from classical mixture models.  Two important special cases are worth noting.  If every individual is pure (i.e., each row of \(\mathbf{\Pi}\) is a standard basis vector), then the model reduces to the classical sub‑Gaussian mixture model.  If, in addition, the noise entries are independent and identically distributed Gaussian with variance \(\sigma^2\), we recover the classical Gaussian mixture model (GMM), which has been extensively analyzed in the literature \citep{loffler2021optimality, ndaoud2022sharp, chen2021cutoff}.  Hence, the proposed MMSG model provides a unified framework that bridges hard clustering and overlapping clustering under sub‑Gaussian noise.

\begin{figure}[!htbp]
\centering
\includegraphics[width=0.88888888\textwidth]{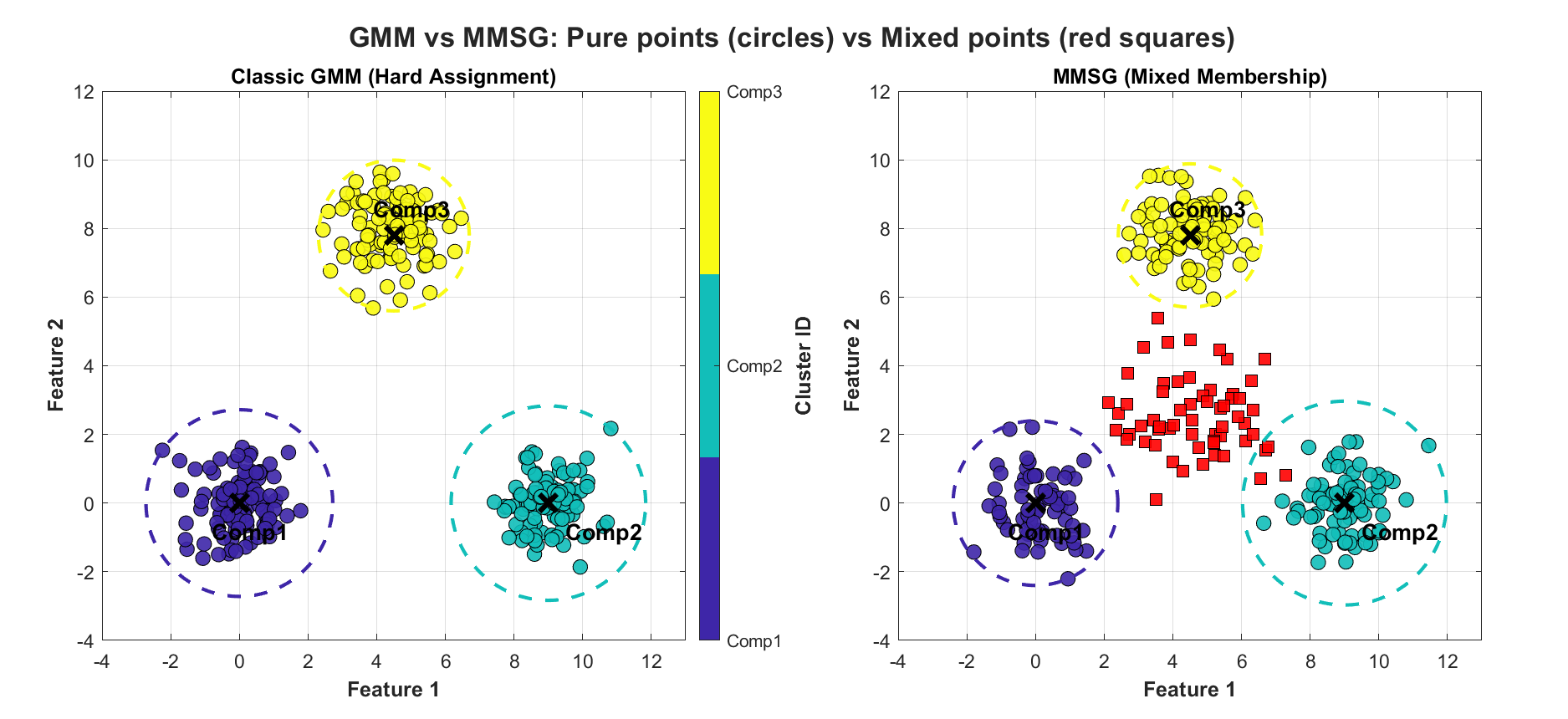}
\caption{Comparison between the classical GMM and the proposed MMSG on a synthetic toy example with \(K=3\) components, isotropic sub‑Gaussian noise of magnitude \(0.8\), and \(n=300\) observations. Left panel: GMM with hard assignment. Each observation belongs to exactly one component; colours indicate the true component (Comp1, Comp2, Comp3). Dashed circles enclose the points assigned to each component, showing the disjoint compact clusters enforced by the single‑membership constraint. Right panel: MMSG with mixed membership. Pure individuals (80\% of the data, circles) are coloured by their unique component. Mixed individuals (20\%, red squares) have fractional membership vectors \(\boldsymbol{\pi}_i\) and therefore lie in the overlap regions between the dashed circles. The component centres (black crosses) are identical in both panels. Unlike GMM, MMSG allows observations to occupy intermediate positions, reflecting a smooth transition between clusters that cannot be represented by any hard‑assignment model.}
\label{fig:GvsM}
\end{figure}

Figure~\ref{fig:GvsM} illustrates the fundamental difference between the two models using the same component centres and noise level. In the left panel, the classical GMM forces a strict partition: every point is assigned to exactly one of the three components, and the dashed circles (drawn to contain all points of each component) are well separated. In the right panel, the MMSG model relaxes this restriction. Here, 80\% of the individuals are pure (circles) and serve as anchors for the three components. The remaining 20\% are mixed (red squares). Their locations are generated as a weighted combination of the component centres according to their membership vectors \(\boldsymbol{\pi}_i\) plus sub‑Gaussian noise. As a result, these mixed points naturally populate the regions between the dashed circles, creating a gradual transition from one component to another. This toy example demonstrates concretely that the MMSG model can represent overlapping cluster structures and fractional memberships, whereas the GMM cannot.

Before proceeding, we discuss identifiability of the parameters \((\mathbf{\Theta},\mathbf{\Pi})\).  A standard condition in mixed membership models is the existence of pure individuals \citep{mao2021estimating,jin2024mixed,chen2024spectral}.  Specifically, we assume there exists an index set \(\mathcal{I}\subset[n]\) with \(|\mathcal{I}|=K\) such that, after a suitable permutation of the component labels, \(\mathbf{\Pi}(\mathcal{I},:) = \mathbf{I}_K\). That is, each component has at least one pure individual belonging exclusively to that component.  Under this pure‑individual condition, the following identifiability result holds.  The proof follows the same reasoning as Theorem 2 of \citep{chen2024spectral} for the grade‑of‑membership model, relying only on the low‑rank factorization \(\mathbf{P}=\mathbf{\Theta}\mathbf{\Pi}^{\top}\) and the existence of pure individuals.

\begin{prop}\label{prop:identifiability}
Consider the MMSG model defined in Equation (\ref{DefinMMSG}) with the mixed membership matrix \(\mathbf{\Pi}\) having nonnegative rows summing to one and each component having at least one pure individual.  Then:
\begin{enumerate}
\item[(a)] If \(\operatorname{rank}(\mathbf{\Theta}) = K\), the model is identifiable up to a permutation of the \(K\) components.
\item[(b)] If \(\operatorname{rank}(\mathbf{\Theta}) = K-1\) and no column of \(\mathbf{\Theta}\) is an affine combination of the other columns, the model is also identifiable up to a permutation.
\item[(c)] In any other case, if there exists an individual \(i\) with \(\pi_{ik}>0\) for every \(k\), then the model is not identifiable.
\end{enumerate}
\end{prop}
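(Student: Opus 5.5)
Identifiability here means: if \(\mathbf{\Theta}\mathbf{\Pi}^{\top}=\tilde{\mathbf{\Theta}}\tilde{\mathbf{\Pi}}^{\top}\), with both pairs satisfying the simplex and pure-individual constraints, then \(\tilde{\mathbf{\Theta}}=\mathbf{\Theta}\mathbf{Q}\) and \(\tilde{\mathbf{\Pi}}=\mathbf{\Pi}\mathbf{Q}\) for some permutation matrix \(\mathbf{Q}\). We only need \(\mathbf{P}\), since \(\mathbb{E}[\mathbf{X}]=\mathbf{P}\) is determined by the law of \(\mathbf{X}\). The whole argument is geometric. The columns \(\mathbf{P}(:,i)=\mathbf{\Theta}\boldsymbol{\pi}_i\) are convex combinations of \(\boldsymbol{\theta}_1,\dots,\boldsymbol{\theta}_K\). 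By the pure-individual condition, each \(\boldsymbol{\theta}_k\) itself appears as a column of \(\mathbf{P}\). Hence \(\mathcal{C}:=\operatorname{conv}\{\mathbf{P}(:,i):i\in[n]\}=\operatorname{conv}\{\boldsymbol{\theta}_1,\dots,\boldsymbol{\theta}_K\}\), and this polytope is determined by \(\mathbf{P}\) alone.

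For (a) and (b), the key step is to show that the \(\boldsymbol{\theta}_k\) are \(K\) distinct vertices of \(\mathcal{C}\) and that \(\mathcal{C}\) is a simplex. This holds when \(\boldsymbol{\theta}_1,\dots,\boldsymbol{\theta}_K\) are affinely independent. In case (a), linear independence implies affine independence. In case (b), the condition that no column is an affine combination of the others is exactly affine independence; this is consistent with rank \(K-1\), since the affine hull then has dimension \(K-1\) and does not pass through the origin. Either way, the vertex set of \(\mathcal{C}\) is exactly \(\{\boldsymbol{\theta}_k\}\).

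The same reasoning applies to \((\tilde{\mathbf{\Theta}},\tilde{\mathbf{\Pi}})\). Its pure individuals show that every \(\tilde{\boldsymbol{\theta}}_k\) lies in \(\mathcal{C}\), and \(\mathcal{C}=\operatorname{conv}\{\tilde{\boldsymbol{\theta}}_k\}\). A polytope with \(K\) vertices needs at least those \(K\) points in any generating set of size \(K\), so \(\{\tilde{\boldsymbol{\theta}}_k\}\) equals the vertex set \(\{\boldsymbol{\theta}_k\}\). This gives \(\tilde{\mathbf{\Theta}}=\mathbf{\Theta}\mathbf{Q}\).

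It remains to recover \(\mathbf{\Pi}\). From \(\mathbf{\Theta}\boldsymbol{\pi}_i=\mathbf{\Theta}\mathbf{Q}\tilde{\boldsymbol{\pi}}_i\), both \(\boldsymbol{\pi}_i\) and \(\mathbf{Q}\tilde{\boldsymbol{\pi}}_i\) sum to one. Barycentric coordinates with respect to affinely independent points are unique, so \(\boldsymbol{\pi}_i=\mathbf{Q}\tilde{\boldsymbol{\pi}}_i\) for every \(i\). Equivalently, \([\mathbf{\Theta};\mathbf{1}^{\top}]\) has full column rank \(K\), so the stacked system determines \(\boldsymbol{\pi}_i\).

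For (c), the remaining case is that \(\boldsymbol{\theta}_1,\dots,\boldsymbol{\theta}_K\) are affinely dependent. Take a nonzero \(\mathbf{v}\in\mathbb{R}^K\) with \(\mathbf{\Theta}\mathbf{v}=\mathbf{0}\) and \(\mathbf{1}^{\top}\mathbf{v}=0\). Let \(i\) be the individual with all \(\pi_{ik}>0\). For small \(t\neq 0\), replace \(\boldsymbol{\pi}_i\) by \(\boldsymbol{\pi}_i+t\mathbf{v}\), leaving \(\mathbf{\Theta}\) and all other rows unchanged. The new row stays in the simplex because its entries remain positive and it still sums to one, and \(\mathbf{P}\) is unchanged. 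The constraints are still met: if \(i\) happened to be one of the designated pure individuals it could not have all entries positive (for \(K\ge2\)), so the pure rows are untouched. Thus a distinct \(\tilde{\mathbf{\Pi}}\) produces the same \(\mathbf{P}\). It is not a permutation of \(\mathbf{\Pi}\), since a permutation taking \(\mathbf{\Pi}\) to \(\tilde{\mathbf{\Pi}}\) would have to fix the pure rows \(\mathbf{\Pi}(\mathcal{I},:)=\mathbf{I}_K\) and hence be the identity. Here \(K\ge2\); for \(K=1\) the theta is nonzero or the case is vacuous, and the degenerate \(\mathbf{\Theta}=\mathbf{0}\) trivially falls in (c).

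The main obstacle is the vertex-uniqueness step in (a) and (b). There I would argue directly that an extreme point of \(\mathcal{C}\) must belong to every finite generating set, and that each \(\boldsymbol{\theta}_k\) is extreme. The latter holds because writing \(\boldsymbol{\theta}_k\) as a nontrivial convex combination of the other \(\boldsymbol{\theta}_j\) would contradict affine independence.
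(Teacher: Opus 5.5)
Your proof is correct, but it does not follow the paper's route. The paper gives no argument of its own. It invokes Theorem~2 of \citep{chen2024spectral} for the grade-of-membership model and asserts that the cited proof uses only the factorization \(\mathbf{P}=\mathbf{\Theta}\mathbf{\Pi}^{\top}\), the simplex constraints and the pure individuals, never the range \([0,1]\) of \(\mathbf{\Theta}\).

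You instead give a self-contained convex-geometric argument:
\begin{itemize}
\item \(\operatorname{conv}\{\mathbf{P}(:,i)\}\) is determined by \(\mathbf{P}\), and when the \(\boldsymbol{\theta}_k\) are affinely independent its vertex set is exactly \(\{\boldsymbol{\theta}_k\}\).
\item Memberships are then barycentric coordinates, which are unique because \([\mathbf{\Theta};\mathbf{1}^{\top}]\) has full column rank.
\item For (c), you perturb a strictly positive row along a null vector of \([\mathbf{\Theta};\mathbf{1}^{\top}]\).
\end{itemize}
Your route collapses (a) and (b) into the single criterion of affine independence, with rank \(K\) as a special case. The competing parameterization only needs to satisfy the simplex and pure-individual constraints, not any rank condition. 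The reader also need not trust that the cited proof transfers verbatim from binary-response GoM data to this setting. The paper's citation is shorter and closer to the SVD/simplex viewpoint of Lemma~\ref{lem:simplex} used by the algorithm. However, it cannot be checked from the paper itself.

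There are two small slips, neither load-bearing.

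\textbf{The origin remark in (b) is backwards.} If \(\boldsymbol{\theta}_1,\dots,\boldsymbol{\theta}_K\) are affinely independent and \(\operatorname{rank}(\mathbf{\Theta})=K-1\), their affine hull has dimension \(K-1\). It also lies inside the column space of \(\mathbf{\Theta}\), which has dimension \(K-1\). So the two coincide and the affine hull contains the origin; were it to miss the origin, the rank would be \(K\), which is case (a). For example, take \(K=2\) and \(\boldsymbol{\theta}_2=-\boldsymbol{\theta}_1\neq\mathbf{0}\). Your proof uses only affine independence, so nothing breaks.

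\textbf{The \(K=1\) aside is wrong.} A single point is always affinely independent, so \(\mathbf{\Theta}=\mathbf{0}\) with \(K=1\) falls under (b), not (c). Indeed \(\mathbf{\Pi}=\mathbf{1}_n\) is forced, so the model is identifiable, and placing this case in (c) would contradict the conclusion of (c). Case (c) is simply vacuous when \(K=1\).
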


By Proposition \ref{prop:identifiability}, our MMSG model is well‑defined and identifiable under the conditions stated.  In this paper, to facilitate theoretical analysis, we assume that \(\operatorname{rank}(\mathbf{\Theta}) = K\). Consequently, the model is identifiable.

The difficulty of recovering the mixed membership matrix $\mathbf{\Pi}$ depends critically on two quantities: the minimum distance \(\Delta\) between component centres and the balancedness \(\beta\) of component sizes.  In the classical GMM literature, these parameters determine the fundamental limits of exact recovery \citep{loffler2021optimality, ndaoud2022sharp, chen2021cutoff}.  A larger \(\Delta\) makes the components easier to separate, while a larger \(\beta\) prevents any single component from being too small and therefore hard to estimate.  Specifically, the two key parameters of our MMSG model are defined as
\begin{align*}
\Delta = \min_{k\neq \ell}\|\boldsymbol{\theta}_k-\boldsymbol{\theta}_\ell\|,\qquad
\beta = \frac{\sigma_K^2(\mathbf{\Pi})}{n/K}.
\end{align*}

We should emphasize that our theoretical analysis permits \(\beta\) to approach zero, thereby accommodating arbitrarily small clusters (including the extreme case where some components have vanishing proportions).  This flexibility is essential for handling unbalanced data, a common scenario in real applications.  Although our definition of \(\beta\) uses \(\sigma_K^2(\mathbf{\Pi})\) and thus appears different from that in \citep{loffler2021optimality} and related works, this is because we are operating in the mixed membership setting.  When the MMSG model reduces to the classical GMM (that is, when all individuals are pure), our \(\beta\) degenerates exactly to the balancedness parameter used in \citep{loffler2021optimality}.

\section{Estimation Procedure}\label{sec:alg}

Given the MMSG model \(\mathbf{X} = \mathbf{\Theta}\mathbf{\Pi}^{\top} + \mathbf{E}\), recovering the mixed membership matrix \(\mathbf{\Pi}\) from the observed data \(\mathbf{X}\) can be tackled from a spectral perspective. The signal matrix \(\mathbf{P} = \mathbf{\Theta}\mathbf{\Pi}^{\top}\) has rank \(K\). Its right singular vectors, as we will show below, have a simplex geometry that directly reveals the pure individuals and, consequently, the membership vectors. The challenge lies in extracting this geometry from the noisy observations \(\mathbf{X}\) without knowing \(\mathbf{P}\).

To build intuition, suppose first that \(\mathbf{P}\) itself were available. Write its compact singular value decomposition as \(\mathbf{P} = \mathbf{V}\mathbf{\Sigma}\mathbf{U}^{\top}\), where \(\mathbf{V}\in\mathbb{R}^{p\times K}\) and \(\mathbf{U}\in\mathbb{R}^{n\times K}\) have orthonormal columns, and \(\mathbf{\Sigma}=\operatorname{diag}(\sigma_1,\sigma_2,\dots,\sigma_K)\) with \(\sigma_1\ge\sigma_2\ge\cdots\ge\sigma_K>0\). The following lemma, whose proof relies only on the existence of pure individuals (one per component), reveals a crucial geometric property.

\begin{lem}\label{lem:simplex}
Under the MMSG model with the pure individual condition, there exists an invertible matrix \(\mathbf{B}\in\mathbb{R}^{K\times K}\) such that
\begin{align*}
\mathbf{U} = \mathbf{\Pi}\mathbf{B}, \qquad \mathbf{B} = \mathbf{U}(\mathcal{I},:).
\end{align*}

Consequently, each row of \(\mathbf{U}\) is a convex combination of the \(K\) rows \(\mathbf{B}_{1,:},\mathbf{B}_{2,:},\dots,\mathbf{B}_{K,:}\).
\end{lem}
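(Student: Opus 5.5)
We start from the compact SVD \(\mathbf{P}=\mathbf{V}\mathbf{\Sigma}\mathbf{U}^{\top}\) together with the factorization \(\mathbf{P}=\mathbf{\Theta}\mathbf{\Pi}^{\top}\). Transposing gives \(\mathbf{P}^{\top}=\mathbf{U}\mathbf{\Sigma}\mathbf{V}^{\top}=\mathbf{\Pi}\mathbf{\Theta}^{\top}\). Since \(\mathbf{V}\) has orthonormal columns and \(\mathbf{\Sigma}\) is invertible (\(\sigma_K>0\), because \(\operatorname{rank}(\mathbf{P})=K\)), right-multiplying by \(\mathbf{V}\mathbf{\Sigma}^{-1}\) yields
\[
\mathbf{U}=\mathbf{\Pi}\,\mathbf{\Theta}^{\top}\mathbf{V}\mathbf{\Sigma}^{-1}.
\]
So we set \(\mathbf{B}:=\mathbf{\Theta}^{\top}\mathbf{V}\mathbf{\Sigma}^{-1}\in\mathbb{R}^{K\times K}\), which gives \(\mathbf{U}=\mathbf{\Pi}\mathbf{B}\).

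To identify \(\mathbf{B}\), we restrict this identity to the rows indexed by the pure set \(\mathcal{I}\). After the label permutation, \(\mathbf{\Pi}(\mathcal{I},:)=\mathbf{I}_K\), so
\[
\mathbf{U}(\mathcal{I},:)=\mathbf{\Pi}(\mathcal{I},:)\mathbf{B}=\mathbf{B}.
\]

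The main point needing care is invertibility of \(\mathbf{B}\), which I would argue through ranks. First I check that \(\operatorname{rank}(\mathbf{P})=K\), so that the compact SVD really has \(K\) positive singular values:
\begin{itemize}
\item \(\mathbf{\Pi}\) has rank \(K\), since it contains \(\mathbf{I}_K\) as a row submatrix;
\item \(\mathbf{\Theta}\) has rank \(K\) by the standing assumption;
\item hence \(\mathbf{\Theta}\mathbf{\Pi}^{\top}\) has rank \(K\), because \(\mathbf{\Pi}^{\top}\) is surjective onto \(\mathbb{R}^K\) and \(\mathbf{\Theta}\) is injective.
\end{itemize}
Then \(K=\operatorname{rank}(\mathbf{U})=\operatorname{rank}(\mathbf{\Pi}\mathbf{B})\le\operatorname{rank}(\mathbf{B})\le K\), so \(\mathbf{B}\) is invertible. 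Alternatively, \(\mathbf{B}=\mathbf{U}(\mathcal{I},:)\), and \(\mathbf{U}=\mathbf{\Pi}\mathbf{B}\) having full column rank forces \(\mathbf{B}\) to have full rank.

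Finally, the convex-combination statement follows by reading \(\mathbf{U}=\mathbf{\Pi}\mathbf{B}\) row by row: \(\mathbf{U}_{i,:}=\sum_k\boldsymbol{\pi}_{ik}\mathbf{B}_{k,:}\), where the weights \(\boldsymbol{\pi}_{ik}\) are nonnegative and sum to one. None of the steps is a real obstacle; the only subtlety is the full-rank verification above.
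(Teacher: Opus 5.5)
Your proof is correct and follows the paper's argument essentially step for step. You use the same choice $\mathbf{B}=\mathbf{\Theta}^{\top}\mathbf{V}\mathbf{\Sigma}^{-1}$, the same restriction to the pure rows $\mathcal{I}$, and the same row-wise convex-combination reading. The one difference is minor: you get invertibility of $\mathbf{B}$ from $K=\operatorname{rank}(\mathbf{U})\le\operatorname{rank}(\mathbf{B})$, whereas the paper writes $\mathbf{\Theta}=\mathbf{V}\mathbf{R}$ with $\mathbf{R}$ invertible, using $\operatorname{col}(\mathbf{P})=\operatorname{col}(\mathbf{\Theta})$. That identity quietly relies on $\mathbf{\Pi}$ having full column rank, which you verify explicitly.
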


Thus, the rows of $\mathbf{U}$ live in a simplex in $\mathbb{R}^{K}$, and the vertices of that simplex are precisely the rows of $\mathbf{U}$ that correspond to pure individuals. This simplex structure is a recurring theme in mixed membership models: it appears in mixed membership stochastic blockmodels for network data \citep{mao2021estimating, jin2024mixed}, in grade of membership models for ordinal categorical data \citep{qing2024finding,qing2024grade}, and in topic models \citep{ke2024using}. If we can locate those vertices, we can recover $\mathbf{B}$ and then solve for $\mathbf{\Pi}$ from $\mathbf{U} = \mathbf{\Pi}\mathbf{B}$. Because each row of $\mathbf{\Pi}$ sums to one and is nonnegative, the final step is a simple row-wise normalization. Thus, in the oracle setting where $\mathbf{P}$ is known, we would therefore:
\begin{enumerate}
\item Compute the top \(K\) right singular vectors \(\mathbf{U}\) of \(\mathbf{P}\).
\item Apply a vertex hunting algorithm, such as the successive projection algorithm (SPA) \citep{gillis2013fast}, to the rows of \(\mathbf{U}\) to obtain the vertex indices \(\mathcal{I}\).
\item Set \(\mathbf{B} = \mathbf{U}(\mathcal{I},:)\) and compute \(\mathbf{Z} = \mathbf{U}\mathbf{B}^{-1}\).
\item For each \(i=1,2,\dots,n\), set \(\mathbf{\Pi}_{i,:} = \mathbf{Z}_{i,:} / \|\mathbf{Z}_{i,:}\|_1\).
\end{enumerate}

This ideal procedure recovers \(\mathbf{\Pi}\) exactly up to a label permutation since the SPA algorithm can exactly recover the true vertex indices $\mathcal{I}$ as long as the mixed membership matrix $\mathbf{\Pi}$ satisfies the definitions used in MMSG and the pure-individual condition \citep{mao2021estimating}.

Of course, \(\mathbf{P}\) is not observed in practice, and we only observe the data matrix \(\mathbf{X}\). A naive replacement of \(\mathbf{P}\) by \(\mathbf{X}\) fails because the diagonal entries of \(\mathbf{X}^{\top}\mathbf{X}\) contain the squared noise magnitudes, which dominate the signal when the dimension \(p\) is large. Under heteroscedastic noise (which is allowed in our model), the leading eigenvectors can be strongly affected. To remove this bias, we consider the Gram matrix with its diagonal entries set to zero. Define the off‑diagonal operator \(\mathcal{P}_{\text{off-diag}}\) that sets all diagonal entries of a square matrix to zero while leaving off‑diagonal entries unchanged. Then form
\[
\mathbf{G} = \mathcal{P}_{\text{off-diag}}(\mathbf{X}^{\top}\mathbf{X}).
\]

For any two distinct individuals \(i \neq j\), the entry \(\mathbf{G}_{ij} = \mathbf{x}_i^{\top}\mathbf{x}_j\) expands as
\[
\mathbf{G}_{ij} = \mathbf{p}_i^{\top}\mathbf{p}_j + \mathbf{p}_i^{\top}\boldsymbol{\epsilon}_j + \boldsymbol{\epsilon}_i^{\top}\mathbf{p}_j + \boldsymbol{\epsilon}_i^{\top}\boldsymbol{\epsilon}_j.
\]

Under the sub‑Gaussian assumption, the cross terms \(\mathbf{p}_i^{\top}\boldsymbol{\epsilon}_j\) and \(\boldsymbol{\epsilon}_i^{\top}\mathbf{p}_j\) have mean zero and are well concentrated, while \(\boldsymbol{\epsilon}_i^{\top}\boldsymbol{\epsilon}_j\) also has mean zero for \(i\neq j\). Therefore, for \(i\neq j\), \(\mathbb{E}[\mathbf{G}_{ij}] = \mathbf{p}_i^{\top}\mathbf{p}_j = (\mathbf{P}^{\top}\mathbf{P})_{ij}\). The diagonal entries of \(\mathbf{G}\) are zero by construction, whereas the diagonal entries of \(\mathbf{P}^{\top}\mathbf{P}\) are \(\|\mathbf{p}_i\|^2\). Hence \(\mathbb{E}[\mathbf{G}]\) equals \(\mathbf{P}^{\top}\mathbf{P}\) minus its diagonal part. Removing the diagonal introduces a perturbation that shifts the eigenvalues, but under mild signal strength conditions, this perturbation does not change the subspace spanned by the leading eigenvectors in a harmful way. The key point is that the off‑diagonal Gram matrix gives an unbiased estimate of the off‑diagonal entries of \(\mathbf{P}^{\top}\mathbf{P}\) while leaving out the noise‑inflated diagonal. This hollowing technique has been successfully employed in various high‑dimensional settings, including subspace estimation \citep{cai2021subspace}, Gaussian mixture models \citep{ndaoud2022sharp}, and a unified $\ell_p$ theory of principal component analysis and spectral clustering \citep{abbe2022lp}. We therefore adopt \(\mathbf{G}\) as a replacement for \(\mathbf{P}^{\top}\mathbf{P}\) in the subsequent spectral analysis.

Let \(\hat{\mathbf{U}}\in\mathbb{R}^{n\times K}\) be the matrix of orthonormal eigenvectors corresponding to the \(K\) largest eigenvalues of \(\mathbf{G}\), so that \(\mathbf{G} = \hat{\mathbf{U}}\hat{\mathbf{\Lambda}}\hat{\mathbf{U}}^{\top}\) with \(\hat{\mathbf{U}}^{\top}\hat{\mathbf{U}}=\mathbf{I}_K\) and \(\hat{\mathbf{\Lambda}}=\operatorname{diag}(\hat{\lambda}_1,\hat{\lambda}_2,\dots,\hat{\lambda}_K)\). We then apply SPA to the rows of \(\hat{\mathbf{U}}\) to obtain an estimated vertex index set \(\hat{\mathcal{I}}\). By Lemma~\ref{lem:simplex}, the rows of \(\mathbf{U}\) form a perfect simplex. The rows of \(\hat{\mathbf{U}}\) form a noisy version of that simplex. As long as the perturbation is not too large, SPA will still correctly identify the vertices (or at least produce indices close to them). With \(\hat{\mathcal{I}}\) in hand, we form an estimate of the transformation matrix:
\begin{align*}
\hat{\mathbf{Z}} = \max\bigl(0,\; \hat{\mathbf{U}}\,\hat{\mathbf{U}}(\hat{\mathcal{I}},:)^{-1}\bigr),
\end{align*}
where the entrywise maximum with zero enforces the nonnegativity of the membership weights. Finally, we normalize each row to unit \(\ell_1\) norm to obtain the estimated mixed membership matrix:
\begin{align*}
\hat{\mathbf{\Pi}}_{i,:} = \frac{\hat{\mathbf{Z}}_{i,:}}{\|\hat{\mathbf{Z}}_{i,:}\|_1},\qquad i=1,2,\dots,n.
\end{align*}

The complete algorithm, which we call SPG (Sequential Projection for sub-Gaussian), is presented as Algorithm~\ref{alg:SPG}, where the underlying vertex hunting routine SPA is given in Algorithm~\ref{alg:SPA}.

\begin{algorithm}
\caption{Sequential Projection for sub-Gaussian (SPG)}
\label{alg:SPG}
\begin{algorithmic}[1]
\State \textbf{Input:} Data matrix \(\mathbf{X}\in\mathbb{R}^{p\times n}\), number of components \(K\).
\State \textbf{Output:} Estimated mixed membership matrix \(\hat{\mathbf{\Pi}}\in\mathbb{R}^{n\times K}\).
\State Form \(\mathbf{G} = \mathcal{P}_{\text{off-diag}}(\mathbf{X}^{\top}\mathbf{X})\).
\State Compute the top \(K\) eigen-decomposition \(\mathbf{G} = \hat{\mathbf{U}}\hat{\mathbf{\Lambda}}\hat{\mathbf{U}}^{\top}\) with \(\hat{\mathbf{U}}\in\mathbb{R}^{n\times K}\), \(\hat{\mathbf{U}}^{\top}\hat{\mathbf{U}}=\mathbf{I}_K\).
\State Apply SPA (Algorithm~\ref{alg:SPA}) to the rows of \(\hat{\mathbf{U}}\) to obtain \(\hat{\mathcal{I}}\subset[n]\), \(|\hat{\mathcal{I}}|=K\).
\State Compute \(\hat{\mathbf{Z}} = \max(0,\; \hat{\mathbf{U}}\,\hat{\mathbf{U}}(\hat{\mathcal{I}},:)^{-1})\).
\For{\(i=1,2,\dots,n\)}
  \State Set \(\hat{\mathbf{\Pi}}_{i,:} = \hat{\mathbf{Z}}_{i,:} / \|\hat{\mathbf{Z}}_{i,:}\|_1\).
\EndFor
\State \Return \(\hat{\mathbf{\Pi}}\).
\end{algorithmic}
\end{algorithm}

\begin{algorithm}
\caption{Successive projection algorithm (SPA) \citep{gillis2013fast}}
\label{alg:SPA}
\begin{algorithmic}[1]
\State \textbf{Input:} \(\mathbf{Y}\in\mathbb{R}^{n\times K}\), integer \(K\).
\State \textbf{Output:} Vertex index set \(\mathcal{K}\subset[n]\), \(|\mathcal{K}|=K\).
\State \(\mathcal{K} = \emptyset\), \(\mathbf{R} = \mathbf{Y}\), \(t=1\).
\While{\(t\le K\)}
  \State \(i_* = \operatorname{argmax}_{i} \|\mathbf{R}_{i,:}\|_2\) (break ties arbitrarily).
  \State \(\mathcal{K} = \mathcal{K} \cup \{i_*\}\).
  \State \(\mathbf{u} = \mathbf{R}_{i_*,:}\).
  \State \(\mathbf{R} = \bigl(\mathbf{I}_n - \frac{\mathbf{u}\mathbf{u}^{\top}}{\|\mathbf{u}\|_2^2}\bigr)\mathbf{R}\).
  \State \(t = t+1\).
\EndWhile
\State \Return \(\mathcal{K}\).
\end{algorithmic}
\end{algorithm}

The computational cost of SPG is dominated by two steps: forming the Gram matrix \(\mathbf{X}^{\top}\mathbf{X}\) and computing its leading \(K\) eigenvectors after diagonal removal. Computing \(\mathbf{X}^{\top}\mathbf{X}\) directly requires \(O(n^{2}p)\) operations and \(O(n^{2})\) storage. The eigen-decomposition of the \(n\times n\) matrix \(\mathbf{G}\) can be performed in \(O(n^{3})\) time via a full decomposition, or in \(O(n^{2}K)\) total time when only the top \(K\) eigenvectors are needed. The storage cost of \(\mathbf{G}\) is \(O(n^{2})\). The successive projection algorithm (SPA) runs in \(O(nK^{2})\) time: each of its \(K\) iterations finds the row with maximum \(\ell_{2}\) norm (\(O(nK)\)) and updates the residual matrix via a rank‑1 projection (\(O(nK)\)). This cost is negligible when \(K\ll n\). In summary, the overall time complexity of SPG is \(O(n^{2}p + n^{2}K + nK^{2})\), and the space complexity is \(O(n^{2})\). When \(n\) is very large, approximation techniques can reduce both measures, but a detailed discussion is beyond the present scope.

\section{Theoretical Guarantees}\label{sec:theory}
In this section, we establish the vanishing estimation error property of the spectral estimator \(\hat{\mathbf{\Pi}}\) returned by Algorithm~\ref{alg:SPG} under the proposed mixed membership sub‑Gaussian model. The analysis shows how the key model parameters—the minimum centre distance \(\Delta\), the balancedness \(\beta\), the sub‑Gaussian noise level \(\eta\) (all defined in Section~\ref{sec:model}), the condition numbers of the centre and membership matrices, and the incoherence of the signal matrix—together determine the estimation difficulty. Under mild sufficient conditions expressed in terms of these quantities, we prove that the row‑wise \(\ell_1\) error vanishes asymptotically with high probability.

We first define the condition numbers that appear in the theorem and its proof. Set
\begin{align*}
\kappa = \frac{\sigma_1(\mathbf{\Theta})}{\sigma_K(\mathbf{\Theta})},\qquad 
\kappa_{\mathbf{\Pi}}=\frac{\sigma_1(\mathbf{\Pi})}{\sigma_K(\mathbf{\Pi})},\qquad 
\kappa_{\mathbf{P}}=\frac{\sigma_1(\mathbf{P})}{\sigma_K(\mathbf{P})}.
\end{align*}

Recall that \(\mathbf{P}=\mathbf{\Theta}\mathbf{\Pi}^{\top}\) is the signal matrix and its singular value decomposition is \(\mathbf{P}=\mathbf{V}\mathbf{\Sigma}\mathbf{U}^{\top}\) with \(\mathbf{U}\in\mathbb{R}^{n\times K}\), \(\mathbf{V}\in\mathbb{R}^{p\times K}\) having orthonormal columns. The incoherence parameters related to the signal matrix \(\mathbf{P}\) are defined as
\begin{align*}
\mu_0 = \frac{p n \max_{j\in[p],\,i\in[n]} |\mathbf{P}_{j,i}|^2}{\|\mathbf{P}\|_{\mathrm{F}}^2},\qquad
\mu_1 = \frac{n}{K}\max_{i\in[n]} \|\mathbf{U}_{i,:}\|^2,\qquad
\mu_2 = \frac{p}{K}\max_{j\in[p]} \|\mathbf{V}_{j,:}\|^2.
\end{align*}

Set \(d = \max\{n,p\}\) and \(\mu = \max\{\mu_0,\mu_1,\mu_2\}\). These constants appear in the main theoretical results given below, and they are bounded by absolute constants in many practical scenarios, which leads to simplified scaling laws. We now present the main theorem.
\begin{thm}\label{thm:main}
Let the estimator \(\hat{\mathbf{\Pi}}\) be produced by Algorithm~\ref{alg:SPG}. Suppose that the data dimensions satisfy 
\begin{align}\label{datedimnpK}
np\gg \mu^2\kappa^8\kappa^8_{\mathbf{\Pi}}K^2\log^4 d, ~~ p\gg \frac{\kappa^8\kappa^8_{\mathbf{\Pi}}}{\beta}K\log^2 d,~~ n\gg\frac{\mu^{\frac{1}{3}}\kappa^{\frac{8}{3}}\kappa^4_{\mathbf{\Pi}}}{\beta^{\frac{2}{3}}}K\sigma^{\frac{2}{3}}_1(\mathbf{\Pi}),
\end{align}
and the separation conditions
\begin{align}\label{sepCond}
\Delta\gg\frac{\eta\kappa^{\frac{7}{2}}\kappa^5_{\mathbf{\Pi}}\mu^{\frac{1}{2}}}{\beta^{\frac{1}{2}}}\frac{K\sigma_{1}(\mathbf{\Pi})\sqrt{\log d}}{\sqrt{n}},~~\Delta\gg \frac{\eta\kappa^2\kappa^2_{\mathbf{\Pi}}\mu^{\frac{1}{4}}}{\beta^{\frac{1}{2}}}\sqrt{\sigma_{1}(\mathbf{\Pi})\sqrt{\frac{K}{n}}}(\frac{p}{n})^{\frac{1}{4}}\sqrt{K\log d}
\end{align}
hold, then with probability at least \(1-O(d^{-10})\), we have
\begin{align*}	
\max_{i\in[n]} \|\hat{\mathbf{\Pi}}_{i,:} - (\mathbf{\Pi}\mathcal{P})_{i,:}\|_1 =o(1),
\end{align*}
where \(\mathcal{P}\) is a permutation matrix .
\end{thm}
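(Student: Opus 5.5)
The proof follows the standard three-stage template for simplex-based mixed membership estimators: a row-wise eigenvector perturbation bound, a stability result for SPA, and propagation of the error through the inversion and normalization steps.

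\textbf{Step 1: row-wise eigenvector perturbation.} We compare \(\hat{\mathbf{U}}\) with \(\mathbf{U}\) and aim for a bound of the form
\[
\|\hat{\mathbf{U}}\mathbf{O}-\mathbf{U}\|_{2\to\infty}\le \varepsilon,
\]
for some \(K\times K\) orthogonal matrix \(\mathbf{O}\). We will take this from the \(\ell_{2\to\infty}\) theory for hollowed Gram matrices (the diagonal-deleted PCA/HeteroPCA-type results of \citep{cai2021subspace,abbe2022lp}), applied to \(\mathbf{X}=\mathbf{P}+\mathbf{E}\) with independent sub-Gaussian entries.

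That theory gives, with probability \(1-O(d^{-10})\), a bound of order
\[
\varepsilon \lesssim \kappa_{\mathbf{P}}^2\sqrt{\tfrac{\mu K}{n}}\Bigl(\frac{\eta\sqrt{\mu K\log d}\,\sigma_1(\mathbf{P})/\sqrt{n}+\eta^2\sqrt{np}\log d}{\sigma_K^2(\mathbf{P})}+\frac{\mu K\,\sigma_1^2(\mathbf{P})}{n\,\sigma_K^2(\mathbf{P})}\Bigr)+\cdots.
\]
The last term is the bias from deleting the diagonal of \(\mathbf{P}^\top\mathbf{P}\); it is controlled through the incoherence \(\mu_1\). The dimension conditions \eqref{datedimnpK} are what make the hypotheses of that theorem valid: the spectral gap must dominate the noise, and the diagonal bias must be small.

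We then translate to the model parameters. First, \(\sigma_K(\mathbf{P})\ge\sigma_K(\mathbf{\Theta})\sigma_K(\mathbf{\Pi})\) and \(\kappa_{\mathbf{P}}\le\kappa\kappa_{\mathbf{\Pi}}\). Second, \(\sigma_K^2(\mathbf{\Pi})=\beta n/K\). Third, since \(\mathbf{\Theta}\) has rank \(K\), we have \(\Delta\le\sqrt2\,\sigma_1(\mathbf{\Theta})\), hence \(\sigma_K(\mathbf{\Theta})\ge\Delta/(\sqrt2\kappa)\). Substituting these gives a bound of the form
\[
\sqrt{n/K}\,\varepsilon\to 0 .
\]
The two separation conditions in \eqref{sepCond} are designed to kill, respectively, the linear noise term \(\eta\,\sigma_1(\mathbf{\Pi})/\Delta\) and the quadratic noise term \(\eta^2\sqrt{p/n}/\Delta^2\).

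\textbf{Step 2: SPA stability.} By Lemma~\ref{lem:simplex}, the rows of \(\mathbf{U}\mathbf{O}^\top\) lie in a simplex with vertex matrix \(\mathbf{B}\mathbf{O}^\top\), and
\[
\sigma_K(\mathbf{B})\ge 1/\sigma_1(\mathbf{\Pi}),
\]
because \(\mathbf{U}=\mathbf{\Pi}\mathbf{B}\) and \(\mathbf{U}^\top\mathbf{U}=\mathbf{I}_K\). Gillis–Vavasis robustness of SPA, in the form used in \citep{mao2021estimating}, then gives a permutation \(\mathcal{P}\) such that
\[
\|\mathcal{P}^\top\hat{\mathbf{U}}(\hat{\mathcal{I}},:)\mathbf{O}-\mathbf{B}\|_{2\to\infty}\lesssim \kappa^2(\mathbf{B})\,\varepsilon ,
\]
provided \(\varepsilon\ll\sigma_K(\mathbf{B})/(\sqrt K\kappa^2(\mathbf{B}))\). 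We bound \(\kappa(\mathbf{B})\le\kappa_{\mathbf{\Pi}}\), which is why high powers of \(\kappa_{\mathbf{\Pi}}\) enter the theorem.

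\textbf{Step 3: error propagation.} Write \(\hat{\mathbf{B}}=\hat{\mathbf{U}}(\hat{\mathcal{I}},:)\mathbf{O}\). A matrix inverse perturbation bound gives
\[
\|\hat{\mathbf{B}}^{-1}-\mathbf{B}^{-1}\mathcal{P}^{\top}\|\lesssim \|\mathbf{B}^{-1}\|^2\sqrt K\kappa_{\mathbf{\Pi}}^2\varepsilon .
\]
Combining this with \(\|\mathbf{B}^{-1}\|\le\sigma_1(\mathbf{\Pi})\) and \(\|\mathbf{U}\|_{2\to\infty}\le\sqrt{\mu K/n}\), we get for each row
\[
\|\hat{\mathbf{U}}_{i,:}\hat{\mathbf{B}}^{-1}\mathbf{O}^\top-(\mathbf{\Pi}\mathcal{P})_{i,:}\|_1\le \sqrt K\cdot\mathrm{err}_i,
\]
with \(\mathrm{err}_i\lesssim \sigma_1(\mathbf{\Pi})\sqrt K\kappa_{\mathbf{\Pi}}^2\varepsilon\,(1+\sigma_1(\mathbf{\Pi})\sqrt{\mu K/n})\).

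The truncation \(\max(0,\cdot)\) is a contraction toward the nonnegative vector \((\mathbf{\Pi}\mathcal{P})_{i,:}\), so it does not increase this error. For the normalization, the identity
\[
\bigl\|\mathbf{z}/\|\mathbf{z}\|_1-\boldsymbol{\pi}\bigr\|_1\le 2\|\mathbf{z}-\boldsymbol{\pi}\|_1 \quad\text{whenever }\|\boldsymbol{\pi}\|_1=1
\]
finishes the argument. The resulting bound is \(o(1)\) exactly under \eqref{sepCond}.

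\textbf{Main obstacle.} The hard step is Step 1: obtaining a sharp \(\ell_{2\to\infty}\) bound for the hollowed Gram matrix under heteroscedastic sub-Gaussian noise, with explicit dependence on \(\mu,\kappa,\beta\). The quadratic term \(\mathcal{P}_{\text{off-diag}}(\mathbf{E}^\top\mathbf{E})\) requires a leave-one-out argument or a decoupling/Hanson–Wright argument. The first approach I would try is to invoke the Cai et al.\ row-wise theorem directly and then verify that its conditions reduce to \eqref{datedimnpK}, tracking the exponents carefully.
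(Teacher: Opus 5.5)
Your architecture is the paper's. You use Theorem~1 of Cai et al.\ with full sampling for $\|\hat{\mathbf{U}}\mathcal{O}-\mathbf{U}\|_{2\to\infty}$. You use the translations $\sigma_K(\mathbf{P})\ge\Delta\sigma_K(\mathbf{\Pi})/(\sqrt2\kappa)$, $\kappa_{\mathbf{P}}\le\kappa\kappa_{\mathbf{\Pi}}$ and $\mu_1\le 1/\beta$. Then comes the SPA/inversion/normalization analysis of Mao et al. The paper does not redo that last step; it imports it from their Theorem~3.2 (SPACL without pruning) as $\max_i\|\hat{\mathbf{\Pi}}_{i,:}-(\mathbf{\Pi}\mathcal{P})_{i,:}\|_1=O(\varpi\,\kappa(\mathbf{\Pi}^{\top}\mathbf{\Pi})\sqrt{\lambda_1(\mathbf{\Pi}^{\top}\mathbf{\Pi})})=O(\varpi\,\kappa_{\mathbf{\Pi}}^2\sigma_1(\mathbf{\Pi}))$, with $\varpi=\|\hat{\mathbf{U}}\hat{\mathbf{U}}^{\top}-\mathbf{U}\mathbf{U}^{\top}\|_{2\to\infty}\le 2\varepsilon$. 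Your truncation and normalization arguments are fine, but two pieces of bookkeeping do not deliver the theorem under its stated hypotheses.

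First, the linear noise term you display in Step~1 is not Cai et al.'s. Their $\mathcal{E}_{\text{general}}$ contains $\eta\kappa_{\mathbf{P}}\sqrt{n\log d}/\sigma_K(\mathbf{P})$, which reflects the $\mathbf{E}^{\top}\mathbf{P}$ part of the Gram matrix: row $i$ of $\mathbf{E}^{\top}\mathbf{P}\mathbf{U}\mathbf{\Sigma}^{-2}$ is $\boldsymbol{\epsilon}_i^{\top}\mathbf{V}\mathbf{\Sigma}^{-1}$, of norm about $\eta\sqrt{K\log d}/\sigma_K(\mathbf{P})$. Your term is smaller by a factor $n/\sqrt{\mu K}$. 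The first separation condition is what kills the correct term after multiplication by the prefactor $\kappa_{\mathbf{P}}^2\kappa_{\mathbf{\Pi}}^2\sigma_1(\mathbf{\Pi})\sqrt{\mu K/n}$, so with your expression you would be verifying the wrong requirement. Also, Cai et al.'s hypotheses are not purely dimensional. The noise requirement $\eta/\sigma_K(\mathbf{P})\ll\min\{1/(\kappa_{\mathbf{P}}(np)^{1/4}\sqrt{\log d}),\,1/(\kappa_{\mathbf{P}}^3\sqrt{n\log d})\}$ becomes lower bounds on $\Delta$; the paper writes them as $\Delta\gg\eta\kappa^2\kappa_{\mathbf{\Pi}}\beta^{-1/2}(p/n)^{1/4}\sqrt{K\log d}$ and $\Delta\gg\eta\kappa^4\kappa_{\mathbf{\Pi}}^3\beta^{-1/2}\sqrt{K\log d}$. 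There is also $n\gg\mu_1\kappa_{\mathbf{P}}^4K$. All of these must be traced back to the stated conditions.

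The more serious gap is that Step~3 loses powers of $K$ that the hypotheses cannot absorb. Converting $\|\hat{\mathbf{B}}-\mathcal{P}\mathbf{B}\|_{2\to\infty}$ to a spectral norm costs $\sqrt K$. Bounding $\|\mathbf{U}_{i,:}\|\,\|\hat{\mathbf{B}}^{-1}-\mathbf{B}^{-1}\mathcal{P}^{\top}\|$ costs a further $\sigma_1(\mathbf{\Pi})\sqrt{\mu K/n}$. Your final bound $K\kappa_{\mathbf{\Pi}}^2\sigma_1(\mathbf{\Pi})\varepsilon\,(1+\sigma_1(\mathbf{\Pi})\sqrt{\mu K/n})$ is therefore at least $K$ times the quantity $\kappa_{\mathbf{\Pi}}^2\sigma_1(\mathbf{\Pi})\varepsilon$ that the paper's argument controls.

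The theorem's conditions have no slack for this. The third dimension condition and the two separation conditions are just the requirements that the three terms of $\kappa_{\mathbf{P}}^2\kappa_{\mathbf{\Pi}}^2\sigma_1(\mathbf{\Pi})\sqrt{\mu K/n}\,\mathcal{E}_{\text{general}}$ be $o(1)$; for instance, the condition on $n$ is exactly the one for the diagonal-bias term $\mu_1\kappa_{\mathbf{P}}^2K/n$. So your closing claim that the bound is $o(1)$ under these conditions fails when $K\to\infty$.

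Inside $\mathrm{err}_i$ the loss is avoidable. Since $\mathbf{U}_{i,:}=\boldsymbol{\pi}_i^{\top}\mathbf{B}$, one has $\mathbf{U}_{i,:}(\hat{\mathbf{B}}^{-1}-\mathbf{B}^{-1})=\boldsymbol{\pi}_i^{\top}(\mathbf{B}-\hat{\mathbf{B}})\hat{\mathbf{B}}^{-1}$ (permutation suppressed). Because $\boldsymbol{\pi}_i$ is a probability vector, $\|\boldsymbol{\pi}_i^{\top}(\mathbf{B}-\hat{\mathbf{B}})\|\le\|\mathbf{B}-\hat{\mathbf{B}}\|_{2\to\infty}\lesssim\kappa_{\mathbf{\Pi}}^2\varepsilon$. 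This gives a row-wise $\ell_2$ error $\lesssim\kappa_{\mathbf{\Pi}}^2\sigma_1(\mathbf{\Pi})\varepsilon$.

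One $\sqrt K$ still remains. It comes from $\|\cdot\|_1\le\sqrt K\|\cdot\|_2$, and from the Gillis--Vavasis precondition $\varepsilon\lesssim\sigma_K(\mathbf{B})/(\sqrt K\kappa^2(\mathbf{B}))$, i.e.\ $\sqrt K\kappa_{\mathbf{\Pi}}^2\sigma_1(\mathbf{\Pi})\varepsilon\ll1$, which you assume but never derive from the hypotheses. The paper sidesteps this by taking the $\ell_1$ bound directly from Mao et al. To land on the stated conditions, invoke that result as the paper does. Otherwise, state your conclusion with separation conditions strengthened by $\sqrt K$, or for bounded $K$.
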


Theorem \ref{thm:main} establishes that, under the stated dimension and separation conditions, the spectral estimator \(\hat{\mathbf{\Pi}}\) returned by the proposed SPG approach achieves vanishing \(\ell_1\) estimation error for the MMSG model. The explicit upper bound for \(\max_{i\in[n]} \|\hat{\mathbf{\Pi}}_{i,:} - (\mathbf{\Pi}\mathcal{P})_{i,:}\|_1\) is derived in the proof (see Equation (\ref{el1}) in the appendix), where it depends on all model parameters  \(n,p,K,\Delta,\eta,\mu,\kappa,\kappa_{\mathbf{\Pi}},\beta,\sigma_1(\mathbf{\Pi})\), and it is $o(1)$ when conditions in this theorem hold. The sufficient conditions are grouped into two natural families: scaling of dimensions (see conditions~in Equation (\ref{datedimnpK})) and separation of component centres (see conditions~in Equation (\ref{sepCond})). The dimension scaling conditions ensure that the sample size \(n\), the ambient dimension \(p\), and their product are sufficiently large relative to key problem parameters. These conditions appear complex because we explicitly track the influence of all model parameters (incoherence, condition numbers, balancedness, dimension ratios, and logarithmic factors). The separation conditions are required to ensure vanishing $\ell_1$ estimation error of mixed memberships for each sample. They impose a lower bound on the minimal centre distance \(\Delta\) that grows when the estimation problem becomes harder. Based on the explicit form of the lower bounds in conditions~in Equation (\ref{sepCond}), we can analyze how each parameter affects the required \(\Delta\).
\begin{itemize}
  \item When the balancedness \(\beta = \sigma_K^2(\mathbf{\Pi})/(n/K)\) becomes smaller, the membership matrix \(\mathbf{\Pi}\) is more ill-conditioned because its smallest singular value \(\sigma_K(\mathbf{\Pi})\) is small. This makes the problem harder, and indeed both lower bounds on \(\Delta\) contain a factor \(1/\sqrt{\beta}\). Hence, a smaller \(\beta\) forces a larger \(\Delta\) for vanishing estimation error of mixed memberships.
  \item When the number of components \(K\) increases, the problem becomes more difficult because there are more centres to separate. Therefore, a larger \(K\) requires a larger \(\Delta\) for vanishing estimation error of mixed memberships.
  \item When the condition number \(\kappa = \sigma_1(\mathbf{\Theta})/\sigma_K(\mathbf{\Theta})\) of the centre matrix increases, the centres are harder to distinguish. The first lower bound has \(\kappa^{7/2}\) and the second has \(\kappa^2\). Thus, a larger \(\kappa\) demands a larger \(\Delta\).
  \item When the condition number \(\kappa_{\mathbf{\Pi}} = \sigma_1(\mathbf{\Pi})/\sigma_K(\mathbf{\Pi})\) of the membership matrix increases, the rows of \(\mathbf{\Pi}\) are more skewed, making the pure individual recovery more challenging. The first lower bound has \(\kappa_{\mathbf{\Pi}}^5\) and the second has \(\kappa_{\mathbf{\Pi}}^2\). Hence, a larger \(\kappa_{\mathbf{\Pi}}\) also requires a larger \(\Delta\).
  \item When the noise level \(\eta\) increases, the data are noisier. Both lower bounds are linear in \(\eta\), so a larger \(\eta\) directly forces a larger \(\Delta\).
\end{itemize}

In summary, all these parameters affect the required separation \(\Delta\) monotonically: larger difficulty (smaller \(\beta\), larger \(K\), larger \(\kappa\), larger \(\kappa_{\mathbf{\Pi}}\), larger \(\eta\)) leads to a larger necessary \(\Delta\). This is precisely reflected in the explicit inequalities of conditions~in Equation (\ref{sepCond}). Under these sufficient conditions, with probability at least \(1 - O(d^{-10})\) (which is very high for large \(n\) and \(p\)), the estimated membership matrix satisfies
\(\max_{i\in[n]} \|\hat{\mathbf{\Pi}}_{i,:} - (\mathbf{\Pi}\mathcal{P})_{i,:}\|_1 = o(1),
\)
which means that up to a global relabelling of the components, the row-wise \(\ell_1\) estimation error vanishes for every individual under mild conditions on $\Delta$.

The sufficient conditions in Theorem~\ref{thm:main} involve several model-specific parameters. In many practical scenarios, these parameters are naturally bounded by absolute constants, leading to a dramatically simplified yet still rigorous set of conditions. The following corollary makes this precise and reveals the essential scaling laws that guarantee the vanishing estimation error.
\begin{cor}\label{cor:simplified}
Assume that the model parameters satisfy the following boundedness conditions:
\[
\kappa = O(1),\quad \beta=O(1),\quad 
\kappa_{\mathbf{\Pi}} = O(1),\quad \eta = O(1),\quad \mu = O(1).
\]

Then, the sufficient conditions in Theorem~\ref{thm:main} reduce to
\begin{align}
np &\gg K^2 \log^4 d, \qquad p \gg K \log^2 d, \qquad n \gg K, \label{eq:simplified_dim}\\
\Delta &\gg \sqrt{K \log d}\; \max\!\left\{1,\;\left(\frac{p}{n}\right)^{\!1/4}\right\}. \label{eq:simplified_sep}
\end{align}

Under these conditions, the same conclusion holds: with probability at least \(1-O(d^{-10})\),
\[
\max_{i\in[n]} \|\hat{\mathbf{\Pi}}_{i,:} - (\mathbf{\Pi}\mathcal{P})_{i,:}\|_1 
\;\lesssim\;
\frac{K}{n} \;+\; \frac{\sqrt{K\log d}}{\Delta} \;+\; \frac{K\log d}{\Delta^2}\sqrt{\frac{p}{n}}= o(1),
\]
where \(\mathcal{P}\) is a permutation matrix.
\end{cor}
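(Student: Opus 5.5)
The corollary is a specialization of Theorem~\ref{thm:main}, so I would substitute the boundedness assumptions into the explicit error bound (Equation (\ref{el1}) in the appendix) and into conditions (\ref{datedimnpK})--(\ref{sepCond})~\(\) rather than re-running the perturbation analysis. The one quantity not directly assumed bounded is \(\sigma_1(\mathbf{\Pi})\), and I would control it first. Since \(\kappa_{\mathbf{\Pi}}=O(1)\) and \(\sigma_K^2(\mathbf{\Pi})=\beta n/K\), we get \(\sigma_1(\mathbf{\Pi})=\kappa_{\mathbf{\Pi}}\sigma_K(\mathbf{\Pi})\asymp\sqrt{\beta n/K}\). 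This is consistent with the trivial bounds \(\sigma_1^2(\mathbf{\Pi})\le\|\mathbf{\Pi}\|_{\mathrm F}^2\le n\) and \(\sigma_1^2(\mathbf{\Pi})\ge n/K\) (rows have \(\ell_2\) norm at least \(1/\sqrt K\)). Implicitly, \(\beta\) is bounded away from zero as well as above, since otherwise the \(1/\beta\) factors do not disappear. I would state that reading explicitly (\(\beta\asymp 1\)), because \(\beta\le 1\) automatically and only a lower bound is informative.

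Next I would simplify the conditions term by term.
\begin{itemize}
\item The first two dimension conditions in (\ref{datedimnpK}) become \(np\gg K^2\log^4 d\) and \(p\gg K\log^2 d\) directly.
\item In the third, \(\sigma_1^{2/3}(\mathbf{\Pi})\asymp (n/K)^{1/3}\), so the condition reads \(n\gg K\cdot(n/K)^{1/3}\), i.e.\ \(n^{2/3}\gg K^{2/3}\), i.e.\ \(n\gg K\).
\item In the first separation bound, \(K\sigma_1(\mathbf{\Pi})\sqrt{\log d}/\sqrt n\asymp K\sqrt{n/K}\sqrt{\log d}/\sqrt n=\sqrt{K\log d}\).
\item In the second, \(\sqrt{\sigma_1(\mathbf{\Pi})\sqrt{K/n}}\asymp 1\), leaving \((p/n)^{1/4}\sqrt{K\log d}\).
\end{itemize}
Taking the maximum of the two separation bounds gives (\ref{eq:simplified_sep}).

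For the error rate, I would substitute the same quantities into (\ref{el1}). I expect its terms to reduce to three contributions. The first is an eigenvalue/diagonal-deletion bias term of order \(K/n\), coming from removing \(\mathrm{diag}(\mathbf{P}^\top\mathbf{P})\), whose norm is at most \(\mu K\sigma_1^2(\mathbf{P})/n\) relative to \(\sigma_K^2(\mathbf{P})\). The second is a linear noise term \(\eta\sqrt{K\log d}/\Delta\) from \(\mathbf{P}^\top\mathbf{E}\). The third is a quadratic noise term \(\eta^2K\log d\sqrt{p/n}/\Delta^2\) from \(\mathbf{E}^\top\mathbf{E}\) off the diagonal.

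To translate \(\sigma_K(\mathbf{P})\) into \(\Delta\), I would use \(\sigma_K(\mathbf{P})\ge\sigma_K(\mathbf{\Theta})\sigma_K(\mathbf{\Pi})\) together with \(\Delta\le\sqrt2\,\sigma_1(\mathbf{\Theta})\le\sqrt2\kappa\sigma_K(\mathbf{\Theta})\). This gives \(\sigma_K(\mathbf{P})\gtrsim\Delta\sqrt{n/K}\).

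Finally, I would check that each term is \(o(1)\). The term \(K/n\) is \(o(1)\) by \(n\gg K\). The last two terms are \(o(1)\) by (\ref{eq:simplified_sep}), since \(\Delta^2\gg K\log d\sqrt{p/n}\).

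The main obstacle is bookkeeping. I have to make sure that every term in (\ref{el1}), including the SPA vertex-error amplification (factors of \(\kappa(\mathbf{B})\) and \(\|\mathbf{B}^{-1}\|\), which are \(O(1)\) after normalization by \(\sqrt{n/K}\)), collapses to one of the three displayed terms. The hard part is handling the mixed exponents on \(\sigma_1(\mathbf{\Pi})\) carefully, since it is the only hidden \(n/K\)-dependent parameter.
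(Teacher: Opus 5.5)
Your proposal is correct and follows the paper's route. You control \(\sigma_1(\mathbf{\Pi})\le c\sqrt{n/K}\) through \(\kappa_{\mathbf{\Pi}}\) and \(\beta\) (read as \(\beta\asymp 1\), as the paper also does), and substitute into the conditions of Theorem~\ref{thm:main}; you also plug the same bounds into Equation~(\ref{el1}), a step the paper's proof only asserts, where the prefactor \(\kappa^2\kappa_{\mathbf{\Pi}}^4\sigma_1(\mathbf{\Pi})\sqrt{\mu K/n}\) becomes \(O(1)\) and yields exactly the displayed rate. One harmless slip in a side remark: \(\sigma_1^2(\mathbf{\Pi})\ge n/K\) follows from \(\mathbf{\Pi}\mathbf{1}_K=\mathbf{1}_n\) (test against \(\mathbf{1}_K/\sqrt{K}\)), whereas the row-norm bound you cite only gives \(\|\mathbf{\Pi}\|_{\mathrm{F}}^2\ge n/K\) and hence \(\sigma_1^2(\mathbf{\Pi})\ge n/K^2\), though your argument never uses this lower bound.
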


The boundedness assumptions in Corollary~\ref{cor:simplified} are remarkably mild and hold in a wide range of applications.
\begin{itemize}
    \item Condition number \(\kappa = O(1)\): This means that the ratio is bounded above by an absolute constant that does not depend on the sample size \(n\), the ambient dimension \(p\), or the number of components \(K\). A bounded \(\kappa\) ensures that the columns of the centre matrix \(\mathbf{\Theta}\) are well-conditioned. Equivalently, the component centres are not nearly linearly dependent. The notion of linear dependence here is global: even if some centres are statistically correlated in the sense of having small pairwise angles, the entire set may still be well-conditioned as long as no centre can be approximated as a linear combination of the others. The requirement \(\kappa = O(1)\) excludes degenerate configurations where the centres almost lie in a proper subspace of \(\mathbb{R}^p\), a situation that would compromise identifiability and fundamentally hinder the possibility of achieving vanishing estimation error. This is a standard assumption in the analysis of mixture models, and it is considerably weaker than requiring the centres to be orthogonal or far apart.
\item Balancedness \(\beta=O(1)\): Recall that \(\beta = \sigma_K^2(\mathbf{\Pi})/(n/K)\). Here \(\beta=O(1)\) means that \(\beta\) is bounded above and below by positive absolute constants. Equivalently, there exist constants \(0<c_1\le c_2<\infty\) such that \(c_1\le \beta\le c_2\). Hence \(\sigma_K(\mathbf{\Pi})\) satisfies \(\sqrt{c_1n/K}\le \sigma_K(\mathbf{\Pi})\le \sqrt{c_2n/K}\), i.e., the smallest singular value of \(\mathbf{\Pi}\) is exactly of order \(\sqrt{n/K}\). This ensures that the membership vectors are well spread across the probability simplex, and no component receives asymptotically vanishing total weight. Such a balancedness condition is natural for mixed membership models: each latent component must appear with a substantial overall contribution. When combined with the additional assumption \(\kappa_{\mathbf{\Pi}} = \sigma_1(\mathbf{\Pi})/\sigma_K(\mathbf{\Pi}) = O(1)\), we also obtain \(\sigma_1(\mathbf{\Pi}) = O(\sqrt{n/K})\). Therefore, all singular values of \(\mathbf{\Pi}\) are of the same order \(\sqrt{n/K}\), meaning that \(\mathbf{\Pi}\) is well-conditioned.
\item Noise level \(\eta=O(1)\): The sub‑Gaussian norm of the noise entries is bounded by a constant. This is the typical setting in high‑dimensional statistics: any fixed noise variance qualifies.
\item Incoherence \(\mu=O(1)\). The parameter \(\mu\) controls the maximum leverage of the row and column spaces of the signal matrix \(\mathbf{P}=\mathbf{\Theta}\mathbf{\Pi}^{\top}\). A bounded \(\mu\) follows from explicit and verifiable assumptions on both \(\mathbf{\Pi}\) and \(\mathbf{\Theta}\). For the membership matrix, we require that the smallest singular value satisfies \(\sigma_K(\mathbf{\Pi})\asymp\sqrt{n/K}\), which is equivalent to the balancedness parameter \(\beta=\sigma_K^2(\mathbf{\Pi})/(n/K)\) being bounded below and above by positive constants. Under this condition, Lemma~\ref{lem:mu1_bound} directly gives \(\mu_1 = O(1)\). For the centre matrix \(\mathbf{\Theta}\), we impose two mild regularity conditions. First, the entries of \(\mathbf{\Theta}\) are uniformly bounded by an absolute constant, and each centre vector \(\boldsymbol{\theta}_k\) has Euclidean norm at least of order \(\sqrt{p}\) (so that the signal does not vanish as the dimension grows). Second, the column space of \(\mathbf{\Theta}\) is incoherent: the projection of any standard basis vector onto this space has Euclidean norm at most \(C\sqrt{K/p}\) for some absolute constant \(C\). Under these conditions, a standard calculation shows that both \(\mu_0\) and \(\mu_2\) are bounded by absolute constants. Consequently, \(\mu = \max\{\mu_0,\mu_1,\mu_2\} = O(1)\). In many practical scenarios, such as when the rows of \(\mathbf{\Pi}\) are drawn independently from a Dirichlet distribution with balanced concentration parameters (ensuring \(\beta=O(1)\) with high probability) and the centres are taken as, for example, random matrices with independent bounded entries (so that the norm condition and the incoherence condition hold with high probability), the above assumptions are satisfied. Hence, the boundedness of \(\mu\) is a rigorous consequence of explicit and interpretable conditions, not an ad‑hoc postulate.
\end{itemize}

Under these boundedness assumptions, the original complicated sufficient conditions collapse to the transparent scaling laws in Equations \eqref{eq:simplified_dim} and \eqref{eq:simplified_sep}. Let us interpret them:
\begin{itemize}
    \item The sample size \(n\) and the dimension \(p\) need only grow polynomially in the number of components \(K\) and logarithmically in \(d = \max\{n,p\}\). Specifically, \(n \gg K\) ensures that we have enough observations to estimate the \(K\) pure individuals. The condition \(p \gg K \log^2 d\) guarantees, with high probability, that the operator norm of the sub-Gaussian noise matrix is sufficiently small relative to the smallest singular value of the signal matrix \(\mathbf{P}\). This ensures that the spectral perturbation caused by the noise does not destroy the low‑rank structure of the signal, thereby enabling reliable recovery of the mixed membership vectors with vanishing error. The product condition \(np \gg K^2 \log^4 d\) guarantees that the spectral norm of the noise contribution to the off-diagonal Gram matrix is sufficiently small relative to the signal, thereby controlling the eigenvector perturbation and ensuring accurate subspace estimation.

    \item The minimal Euclidean distance \(\Delta\) between any two distinct component centres must dominate \(\sqrt{K \log d}\) times a factor that depends on the aspect ratio \(p/n\). When \(p\) and \(n\) are of the same order, this factor is constant, so \(\Delta \gg \sqrt{K \log d}\). If \(p\) is much larger than \(n\), the condition becomes slightly more stringent because the extra dimensions amplify the noise. The factor \((p/n)^{1/4}\) captures this effect. This requirement is remarkably mild. For the canonical two-component case with balanced sample sizes or fixed $K$ case, our condition reduces to \(\Delta \gg \sqrt{\log n}\) (up to constants after fixing the noise level). This matches the sharp information-theoretic threshold for exact recovery established in the classical Gaussian mixture model literature \citep{loffler2021optimality,chen2021cutoff,ndaoud2022sharp}. 
\end{itemize}

\section{Numerical Experiments}\label{sec:experiments}
We now describe the simulation design used to validate the theoretical vanishing‑error property of the SPG estimator (Algorithm~\ref{alg:SPG}) under the mixed membership sub‑Gaussian model (MMSG). The primary goal is to verify that the row-wise \(\ell_1\) estimation error vanishes under the conditions of Theorem~\ref{thm:main}, and to compare SPG with the weighted spectral clustering (WSC) for classical Gaussian mixtures studied in \citep{loffler2021optimality,zhang2024leave} (which assumes a single membership per observation). For each parameter configuration, we generate $200$ independent data sets and report averages of the aligned row-wise $\ell_1$ error rate (Hamming error rate):
\[
\text{Error rate} = \frac{1}{n} \min_{\mathcal{P} \in \mathcal{S}_K} \sum_{i=1}^{n} \bigl\| \hat{\boldsymbol{\Pi}}_{i,:} - (\boldsymbol{\Pi}\mathcal{P})_{i,:} \bigr\|_1,
\]
where $\mathcal{S}_K$ is the set of all $K \times K$ permutation matrices

\subsection{Data generation}

For each simulation run, we first construct the component centre matrix \(\mathbf{\Theta}=[\boldsymbol{\theta}_1,\boldsymbol{\theta}_2,\dots,\boldsymbol{\theta}_K]\in\mathbb{R}^{p\times K}\). To achieve equal pairwise distances \(\Delta\) between centres, we take \(\boldsymbol{\theta}_k = \frac{\Delta}{\sqrt{2}}\mathbf{u}_k\), where \(\mathbf{u}_1,\mathbf{u}_2,\dots,\mathbf{u}_K\) are orthonormal vectors obtained from the QR decomposition of a random \(p\times K\) matrix with independent standard normal entries. This guarantees \(\|\boldsymbol{\theta}_k-\boldsymbol{\theta}_\ell\|=\Delta\) for all \(k\neq\ell\). The separation \(\Delta\) is chosen to satisfy the theoretical lower bounds. In the simplified setting of Corollary~\ref{cor:simplified} (where boundedness assumptions hold), we set
\[
\Delta = c_{\Delta}\cdot \sqrt{K\log d}\,\max\!\left\{1,\bigl(p/n\bigr)^{1/4}\right\},\qquad d=\max\{n,p\},
\]
with a constant \(c_{\Delta}\) that we vary to explore the phase transition. For experiments that verify the vanishing estimation error property, we take \(c_{\Delta}=10\), which comfortably exceeds the required lower bound. Importantly, \(\Delta\) is recomputed for every combination of \(n,p,K\) according to this formula, so that the theoretical separation condition is always satisfied.

The mixed membership matrix \(\mathbf{\Pi}\in[0,1]^{n\times K}\) has rows summing to one. We fix a total number of pure individuals \(n_{\text{pure}}\) and distribute them equally among the \(K\) components (so each component receives \(n_{\text{pure}}/K\) pure rows, rounding as needed). The remaining \(n-n_{\text{pure}}\) rows are mixed. For mixed rows we generate independent membership vectors \(\boldsymbol{\pi}_i\) from a Dirichlet distribution \(\text{Dirichlet}(\alpha\mathbf{1}_K)\) with concentration parameter \(\alpha>0\). This construction allows us to control the balancedness parameter \(\beta = \sigma_K^2(\mathbf{\Pi})/(n/K)\) via \(\alpha\): smaller \(\alpha\) yields more extreme memberships (closer to pure), which increases \(\sigma_K(\mathbf{\Pi})\) and hence \(\beta\); larger \(\alpha\) produces more uniform mixtures and reduces \(\beta\).

Noise is generated according to the following two scenarios:
\begin{itemize}
  \item Gaussian heteroscedastic (GHe): Each entry \(\epsilon_{ij}\) is independent \(\mathcal{N}(0,\sigma_i^2)\), where \(\sigma_i\) is drawn uniformly from \([0.5,\eta]\).
  \item Sub-Gaussian heteroscedastic (SHe): We set \(\epsilon_{ij} = \sigma_i r_{ij}\) with i.i.d. Rademacher variables \(r_{ij}\in\{+1,-1\}\) equiprobably and \(\sigma_i\sim\text{Uniform}(0.5,\eta)\).
\end{itemize}

All experiments below are conducted for both noise scenarios (GHe and SHe) and both dimension regimes: the low-dimensional regime where \(p \ll n\) and the high-dimensional regime where \(p \gg n\). Specific parameter choices for each regime are given within each experiment.

\subsubsection{Experiment 1: Varying sample size \(n\)}
\begin{figure}[!htbp]
\centering
\includegraphics[width=0.5\textwidth]{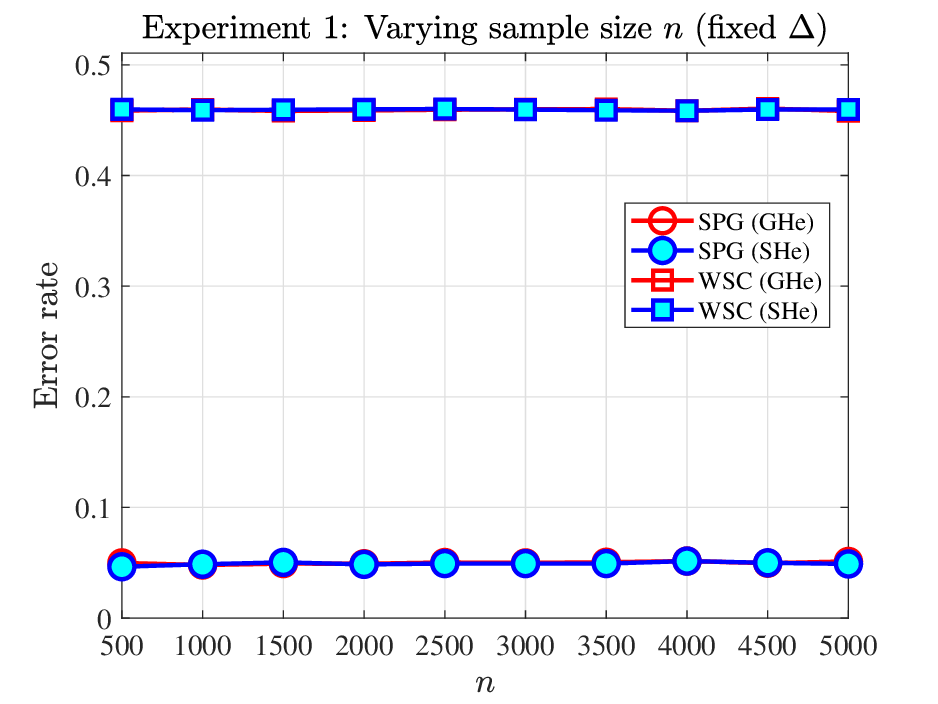}
\caption{Numerical results of Experiment 1}
\label{fig:ex1}
\end{figure}
We examine the effect of the sample size \(n\) while keeping the centre separation \(\Delta\) constant. Fix the feature dimension at \(p=2000\) and choose a sufficiently large separation to satisfy the theoretical condition for all considered \(n\). Specifically, set \(\Delta = 10 \cdot \sqrt{K \log d_{\max}} \cdot \max\{1, (p/n_{\min})^{1/4}\}\), where \(K=4\), \(n_{\min}=500\) and \(d_{\max}=\max\{5000,2000\}\). This yields a constant \(\Delta\) that exceeds the required lower bound for every \(n\) in the range. Let \(n\in\{500,1000,\ldots,5000\}\). This range covers both the high-dimensional regime (\(n < p\)) and the low-dimensional regime (\(n > p\)). To keep the pure proportion constant, we set the number of pure individuals to \(n_{\text{pure}} = \lfloor 0.4n \rfloor\) (so that \(40\%\) of the individuals are pure, equally distributed among the \(K=4\) components). The other parameters are \(\alpha=0.5\) and \(\eta=1\). This design isolates the effect of sample size on the estimation error, because the separation strength \(\Delta\) does not vary with \(n\).

Figure~\ref{fig:ex1} displays the results. The proposed SPG estimator achieves an error below $0.1$ for all $n$, with no visible decline as $n$ grows.  In contrast, WSC yields an error between $0.4$ and $0.5$, also flat across $n$. These patterns are direct consequences of the model structure and the theoretical bounds.  Under the conditions of Corollary~1, the estimation error of SPG is bounded (up to universal constants) by
\(\frac{K}{n} + \frac{\sqrt{K\log d}}{\Delta} + \frac{K\log d}{\Delta^2}\sqrt{\frac{p}{n}},
\)
where $d=\max\{n,p\}$.  The first and third terms decrease as $n$ increases. Because $\Delta$ is held fixed, the middle term remains essentially constant and dominates the bound once $n$ is moderately large. Hence, SPG's error stabilises at a small non‑zero level, exactly as predicted.  The error would vanish only if $\Delta$ were sufficiently large to satisfy the separation condition. Here $\Delta$ is fixed, so the error approaches a constant rather than zero. For WSC, the hard assignment assumption is fundamentally mismatched with mixed membership data. The many mixed individuals (60\% of the sample) cannot be assigned to any single component, so any hard clustering has a bias that does not vanish.  This bias also does not decrease as $n$ grows, which explains why the error of WSC remains near $0.5$ throughout.
\subsubsection{Experiment 2: Varying the separation strength \(c_{\Delta}\)}
\begin{figure}[!htbp]
\centering
\resizebox{\columnwidth}{!}{
{\includegraphics[width=5\textwidth]{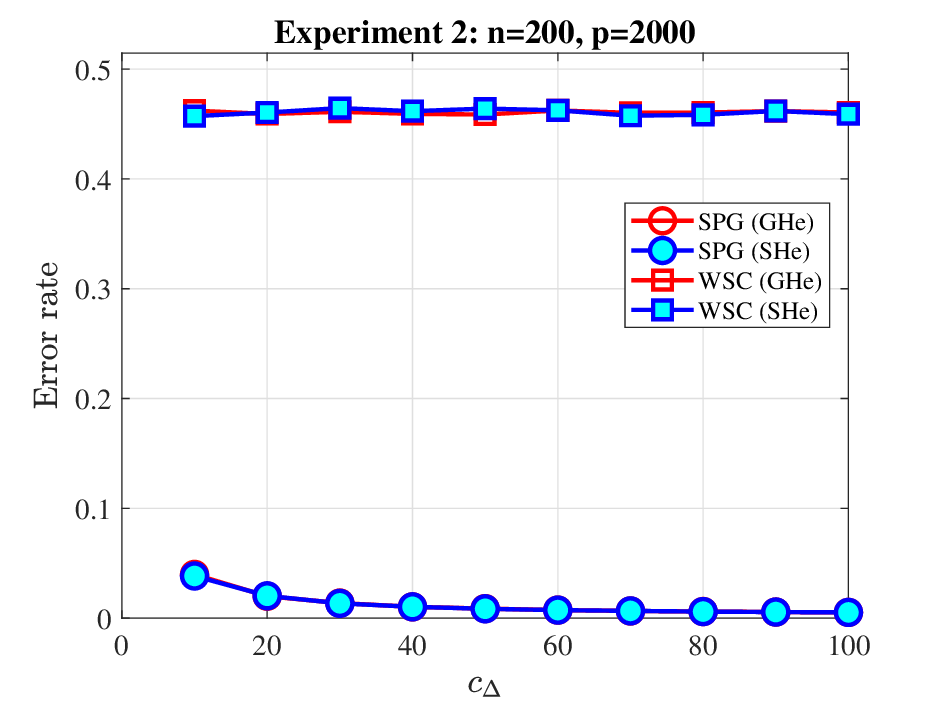}}
{\includegraphics[width=5\textwidth]{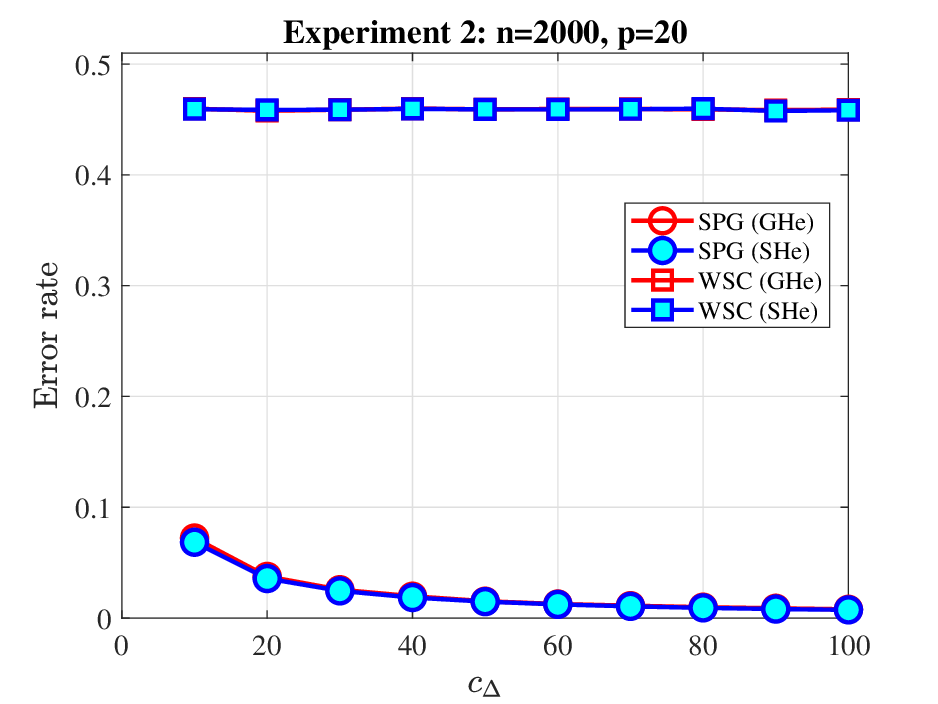}}
}
\caption{Numerical results of Experiment 2.}
\label{fig:ex2} 
\end{figure}
To investigate the sharpness of the separation condition, we consider two representative settings: a high-dimensional setting with \(n=200, p=2000\) (so \(n \ll p\)) and a low-dimensional setting with \(n=2000, p=20\) (so \(n \gg p\)). In both settings, we fix \(K=4\), \(\alpha=0.5\), \(\eta=1\), and maintain a constant pure proportion of \(40\%\) by setting \(n_{\text{pure}} = \lfloor 0.4n \rfloor\). We let \(c_{\Delta}\) vary from \(10\) to \(100\) in steps of \(10\). This allows us to observe the phase transition predicted by Theorem~\ref{thm:main}: below a certain threshold, the error is large, while above it the error becomes small.

Figure~\ref{fig:ex2} plots the estimation error against the separation factor \(c_{\Delta}\). For the SPG estimator, the error falls below 0.1 even at the smallest \(c_{\Delta}\) and continues to decline toward zero as \(c_{\Delta}\) increases. For WSC, its error stays almost constant between 0.4 and 0.5, showing almost no response to larger centre distances. As Theorem~\ref{thm:main} predicts, a larger \(c_{\Delta}\) increases the minimal centre distance \(\Delta\), which reduces noise perturbation. The spectral estimate then becomes more accurate, and the recovered membership vectors converge to the truth. Hence, SPG's error vanishes for sufficiently large \(\Delta\).

In contrast, WSC assumes hard assignments. Even with infinite centre separation, a mixed individual incurs an \(\ell_1\) error that depends only on its true fractional membership vector and has a positive lower bound that does not vanish. Because the distribution of mixed membership vectors is fixed, the average error from mixed individuals is constant. Increasing \(\Delta\) improves pure individual classification but cannot reduce this irreducible error from the mixed majority, which constitutes 60\% of the sample. Consequently, the overall WSC's error remains unchanged as \(c_{\Delta}\) increases. This behaviour shows that mixed membership is an intrinsic property of the data, not a lack of separation. Estimating fractional memberships, therefore, requires a dedicated method such as our SPG.

\subsubsection{Experiment 3: Changing the balancedness \(\beta\) via the Dirichlet concentration \(\alpha\)}
\begin{figure}[!htbp]
\centering
\resizebox{\columnwidth}{!}{
{\includegraphics[width=5\textwidth]{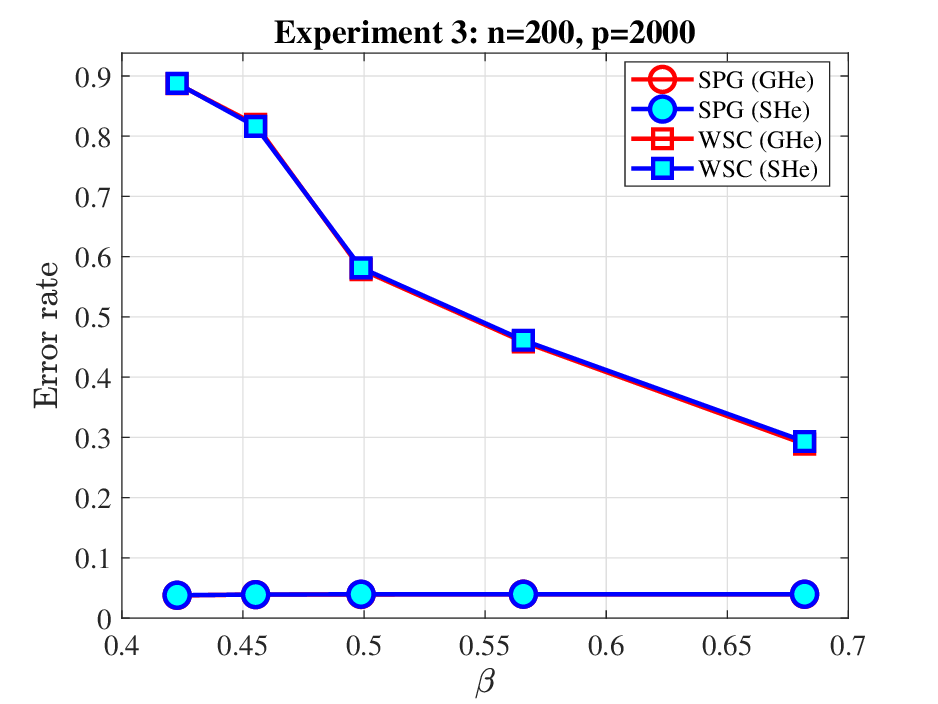}}
{\includegraphics[width=5\textwidth]{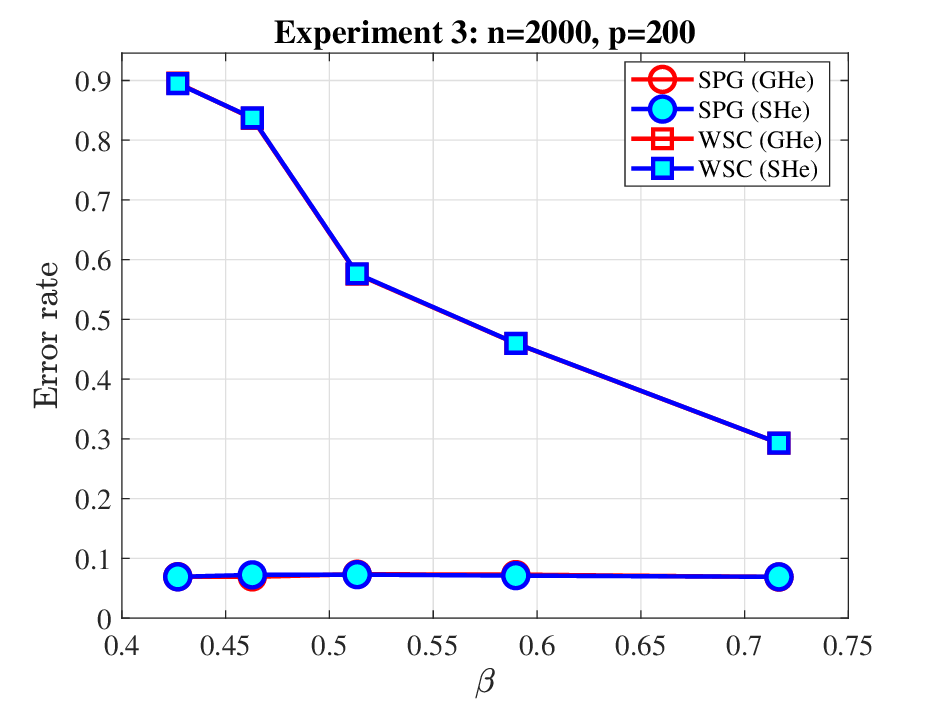}}
}
\caption{Numerical results of Experiment 3.}
\label{fig:ex3} 
\end{figure}
We vary the Dirichlet concentration parameter \(\alpha\in\{0.2,0.5,1.0,2.0,5.0\}\) to control the balancedness \(\beta = \sigma_K^2(\mathbf{\Pi})/(n/K)\). Smaller \(\alpha\) produces more extreme membership vectors (closer to pure), which increases \(\sigma_K(\mathbf{\Pi})\) and hence \(\beta\); larger \(\alpha\) yields more uniform mixtures and reduces \(\beta\). We consider two representative settings: a high-dimensional setting with \(n=200, p=2000\) and a low-dimensional setting with \(n=2000, p=200\). In both settings, we fix \(K=4\), \(c_{\Delta}=10\), \(\eta=1\), and maintain a constant pure proportion of \(40\%\) by setting \(n_{\text{pure}} = \lfloor 0.4n \rfloor\). For each \(\alpha\) we compute the empirical \(\beta\) from the generated \(\mathbf{\Pi}\) and record the average row-wise \(\ell_1\) error. This experiment examines how the estimation error depends on the balancedness of the membership matrix.

Figure~\ref{fig:ex3} reports the estimation errors of SPG and WSC as the balancedness parameter \(\beta\) changes.  Recall that \(\beta = \sigma_K^2(\mathbf{\Pi})/(n/K)\).  A larger \(\beta\) indicates more extreme membership vectors, in which most individuals are nearly pure.  Smaller \(\beta\) corresponds to more uniform mixtures, with fractional memberships spread across components.  The figure reveals two clear patterns.  The estimation error of SPG stays below \(0.1\) across all \(\beta\) values and exhibits no systematic trend.  In contrast, the error of WSC is much larger, ranging between \(0.3\) and \(0.9\), yet it decreases steadily as \(\beta\) grows.

These results follow directly from the model structure and the algorithms.  SPG is designed to recover fractional membership vectors.  As Corollary~\ref{cor:simplified} shows, under the maintained boundedness assumptions (which hold for all configurations in this experiment), the theoretical error bound for SPG does not involve \(\beta\).  In this experiment, the separation constant \(c_{\Delta}=10\) is fixed and sufficiently large to satisfy the theory for every \(\beta\).  Hence, the noise perturbation is well controlled, the estimated eigenvectors stay close to the true simplex, and the recovery of \(\mathbf{\Pi}\) is equally accurate for all \(\beta\).  This explains why SPG's error is stable and uniformly low.

WSC assumes a single component per observation, but this assumption is incorrect for mixed-membership data.  When the overall balancedness \(\beta\) is small, the membership vectors across the population are nearly uniform.  In this regime, a typical mixed individual lies near the centre of the simplex, far from any pure centre.  Forcing such an individual into a single cluster creates a large \(\ell_1\) error.  As \(\beta\) increases, the population shifts toward more extreme membership vectors.  Many individuals become nearly pure, and even the mixed ones move closer to the vertices.  Hence, WSC can correctly classify a larger fraction of pure individuals, and the cost of forced assignments decreases.  WSC’s error, therefore, declines with \(\beta\).  Yet even at the largest \(\beta\), the error remains above \(0.3\), far exceeding SPG’s error.  The remaining gap comes from the mixed individuals that are always present, as the proportion of pure individuals is fixed at \(40\%\).  A hard assignment can never match a true fractional vector, so the error cannot vanish.
\subsubsection{Experiment 4: Varying the proportion of pure individuals}
\begin{figure}[!htbp]
\centering
\resizebox{\columnwidth}{!}{
{\includegraphics[width=5\textwidth]{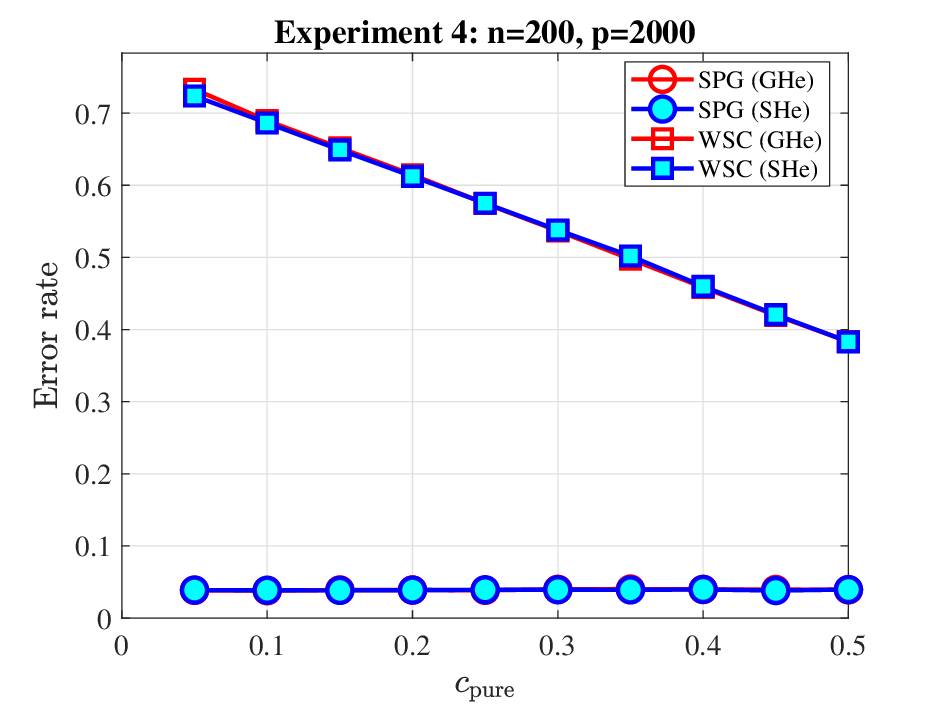}}
{\includegraphics[width=5\textwidth]{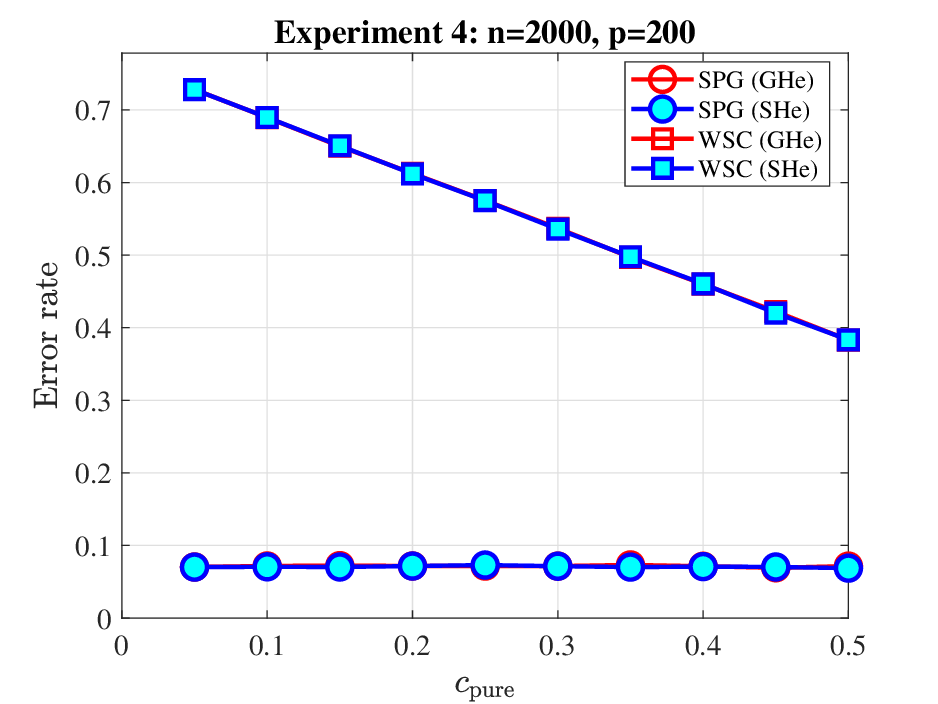}}
}
\caption{Numerical results of Experiment 4.}
\label{fig:ex4} 
\end{figure}
We investigate how the estimation error depends on the proportion of pure individuals. Fix the total sample size \(n\) and feature dimension \(p\) for two regimes: a low-dimensional regime with \(n=2000, p=20\) and a high-dimensional regime with \(n=200, p=2000\). In each regime, we vary the pure proportion \(c_{\text{pure}} \in \{0.05, 0.1,\ldots,0.5\}\) and set the number of pure individuals to \(n_{\text{pure}} = \lfloor c_{\text{pure}} n \rfloor\), with the constraint that \(n_{\text{pure}} \ge K\) (so each component has at least one pure individual). Pure individuals are equally distributed among the \(K=4\) components. The remaining \(n - n_{\text{pure}}\) individuals are mixed, generated from a Dirichlet distribution with \(\alpha=0.5\). The other parameters are \(K=4\), \(c_{\Delta}=10\), \(\eta=1\). 

Figure~\ref{fig:ex4} plots the estimation error against the pure proportion \(c_{\text{pure}}\). The SPG estimator maintains an error below 0.1 across the entire range, showing almost no dependence on \(c_{\text{pure}}\). The WSC estimator, by contrast, starts with an error near 0.8 when only 5\% of the individuals are pure and declines steadily to about 0.4 when half of the sample is pure. Despite this decline, WSC’s error remains much larger than that of SPG for every value of \(c_{\text{pure}}\).

SPG’s insensitivity to \(c_{\text{pure}}\) follows from two facts. First, the estimation error of SPG is primarily controlled by the noise level and the centre separation \(\Delta\), both held constant in this experiment. Second, the vertex hunting step requires only one pure individual per component. Adding more pure individuals does not reduce the noise, increase \(\Delta\), or improve the simplex identification beyond what a single pure individual already provides. Hence, SPG’s error stays flat.

For WSC, the decline is straightforward. Pure individuals are correctly classified when the centres are well separated, as they are here with \(c_{\Delta}=10\). Mixed individuals, generated with \(\alpha=0.5\), contribute a roughly constant error that does not vanish. As \(c_{\text{pure}}\) increases, the average error falls because pure individuals dominate the sample, yet the error never reaches the level of SPG because the mixed individuals remain.

It should be emphasized that this experiment differs from Experiment 3. In Experiment 3, we varied \(\alpha\), which changes the distribution of membership vectors among the mixed individuals themselves. Here \(\alpha\) is fixed at 0.5, so the mixed individuals have the same mixing behaviour throughout; only the number of pure individuals changes. The fact that SPG performs equally well in both settings, while WSC improves in both settings but always remains far behind, reinforces the conclusion that fractional memberships require a method specifically designed to estimate them.

\section{Real Data Applications}\label{sec:realdata}
We now consider three real datasets that have long served as standard benchmarks for clustering: Iris, Wine, and Dermatology \footnote{The three real datasets are available for download at the UCI Machine Learning Repository: \url{https://archive.ics.uci.edu}.}. Each comes with a set of hard labels assigned by domain experts. This conventional perspective forces every observation into a single class. Such a hard assignment, however, may ignore the possibility that some individuals have mixed memberships. Our goal is to assess whether mixed membership patterns exist in these data. Under our model, each observation is equipped with a membership vector whose entries are non‑negative and sum to one. This vector can be degenerate (a pure individual) or have multiple positive entries (a mixed individual). By re‑examining these datasets, we can identify which individuals, if any, exhibit mixed membership characteristics.

Table~\ref{tab:realdata_summary} summarises the basic information of the three datasets. The Iris data, from \citet{fisher1936use}, contains three iris species described by four morphological measurements. The Wine data \citep{aeberhard1994comparative} consists of three wine cultivars with 13 chemical attributes such as alcohol, malic acid, and ash. The Dermatology data \citep{guvenir1998learning} covers six erythemato‑squamous diseases with 34 clinical and histopathological features. Each dataset has a known ground truth number of latent classes.

\begin{table}[!htbp]
\centering
\footnotesize
\caption{Basic information of the three real datasets.}
\label{tab:realdata_summary}
\begin{tabular}{lcccccc}
\toprule
Dataset & Source & Sample meaning & Feature meaning & \(n\) & \(p\) & \(K\) \\
\midrule
Iris        & \citet{fisher1936use}            & Iris flower         & Sepal and petal dimensions                      & 150 & 4   & 3 \\
Wine        & \citet{aeberhard1994comparative}  & Wine sample         & Chemical concentrations (alcohol, malic acid, ash, etc.) & 178 & 13  & 3 \\
Dermatology & \citet{guvenir1998learning}       & Skin lesion         & Clinical and histopathological scores           & 358 & 34  & 6 \\
\bottomrule
\end{tabular}
\end{table}

We apply the SPG algorithm to each dataset. The output is an estimated membership matrix \(\hat{\mathbf{\Pi}}\) whose rows sum to one. For each individual, we define its home base component as \(\hat{c}_i = \arg\max_k \hat{\Pi}_{ik}\). We call an individual highly pure if \(\max_k \hat{\Pi}_{ik} \ge 0.9\) and highly mixed if \(\max_k \hat{\Pi}_{ik} \le 0.6\). Let \(\tau_{\text{pure}}\) and \(\tau_{\text{mixed}}\) denote the proportions of such individuals. We also compute the condition number \(\kappa(\hat{\mathbf{\Pi}}) = \sigma_1(\hat{\mathbf{\Pi}})/\sigma_K(\hat{\mathbf{\Pi}})\). This quantity measures how balanced the estimated membership vectors are across the components. A value close to one indicates that the membership weights are distributed relatively evenly, while larger values suggest that some components dominate the memberships for most individuals.

\begin{table}[!htbp]
\centering
\caption{Estimated mixing characteristics for the three real datasets.}
\label{tab:mixing_results}
\begin{tabular}{lccc}
\toprule
Dataset & \(\tau_{\text{pure}}\) & \(\tau_{\text{mixed}}\) & \(\kappa(\hat{\mathbf{\Pi}})\) \\
\midrule
Iris        & 0.2467 & 0.0600 & 7.6167 \\
Wine        & 0.5506 & 0.1685 & 1.2813 \\
Dermatology & 0.3324 & 0.2737 & 2.1764 \\
\bottomrule
\end{tabular}
\end{table}

The estimated mixing characteristics are reported in Table~\ref{tab:mixing_results}. Several observations follow:
\begin{itemize}
  \item For the Iris dataset, the proportion of highly pure individuals is \(0.2467\) and the proportion of highly mixed individuals is \(0.0600\). The condition number \(\kappa(\hat{\mathbf{\Pi}})=7.6167\) is large. This indicates that the estimated membership vectors are strongly unbalanced. Only a small fraction of observations serve as nearly pure anchors, while most individuals have moderate weights across the three components. The low mixing proportion suggests that few Iris flowers lie on the boundaries between species.

  \item For the Wine dataset, the pure proportion is \(0.5506\) and the mixed proportion is \(0.1685\). The condition number is \(1.2813\), which is close to one and suggests that the membership vectors are well balanced across the three cultivars. The high pure proportion indicates that most wines have a clear dominant cultivar, while the mixed proportion captures those samples that lie on the boundaries between cultivars.

  \item For the Dermatology dataset, the pure proportion is \(0.3324\), the mixed proportion is \(0.2737\), and the condition number is \(2.1764\). This dataset exhibits the highest mixing proportion among the three. The pattern is consistent with the clinical reality that several erythemato‑squamous diseases share overlapping features. About one-third of the patients are highly pure, and about one-quarter are highly mixed. The remaining individuals (nearly \(40\%\)) fall into neither category, indicating that many patients have membership patterns that are moderately spread across several disease classes.
\end{itemize}

To visualise the estimated memberships, we draw ternary plots for the two datasets with three components, namely Iris and Wine. Figure~\ref{fig:ternary_iris_wine} presents the results. Each point represents an individual, and different symbols indicate three categories: highly pure (maximum membership \(\ge 0.9\)), moderate (\(0.6 < \text{maximum membership} < 0.9\)), and highly mixed (maximum membership \(\le 0.6\)). The vertices of the triangle correspond to the purest samples. Highly mixed points cluster near the centre, moderate points appear closer to the edges or vertices, and highly pure points are located at the vertices.

For the Iris data, the number of highly mixed points is very small, consistent with \(\tau_{\text{mixed}} = 0.0600\) in Table~\ref{tab:mixing_results}. Most moderate points concentrate along the edge between Components 1 and 3, with a visible bias toward Component 3. For the Wine data, highly mixed points are far more numerous (\(\tau_{\text{mixed}} = 0.1685\)) and fill the central region of the triangle. Moderate points are widely spread across the simplex, showing no strong preference for any particular edge or vertex. These visual patterns confirm that the mixed membership structure differs substantially between the two datasets: Wine exhibits a larger and more evenly distributed fraction of mixed and moderate individuals.

\begin{figure}[!htbp]
\centering
\resizebox{\columnwidth}{!}{
{\includegraphics[width=0.495\textwidth]{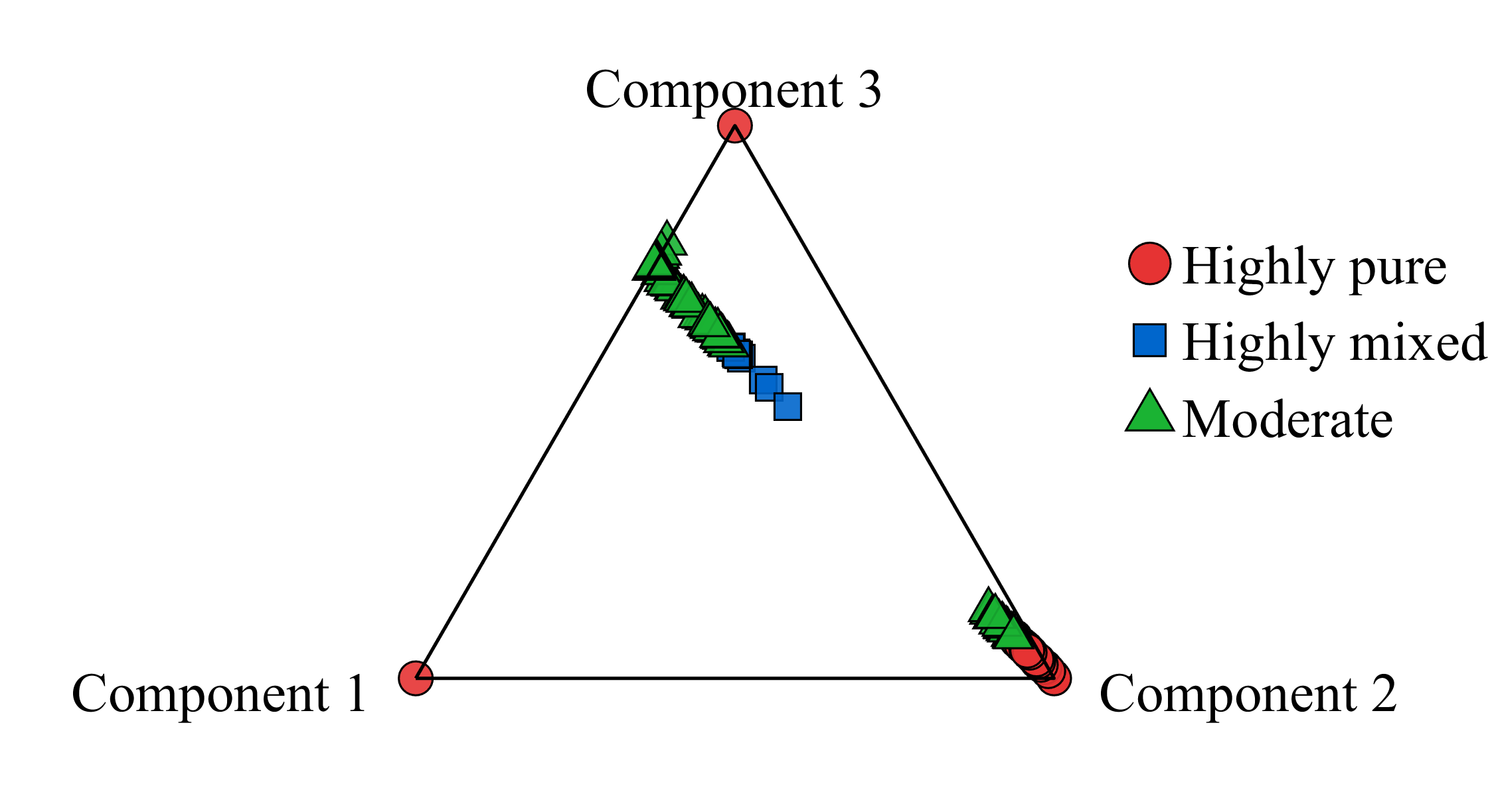}}
{\includegraphics[width=0.495\textwidth]{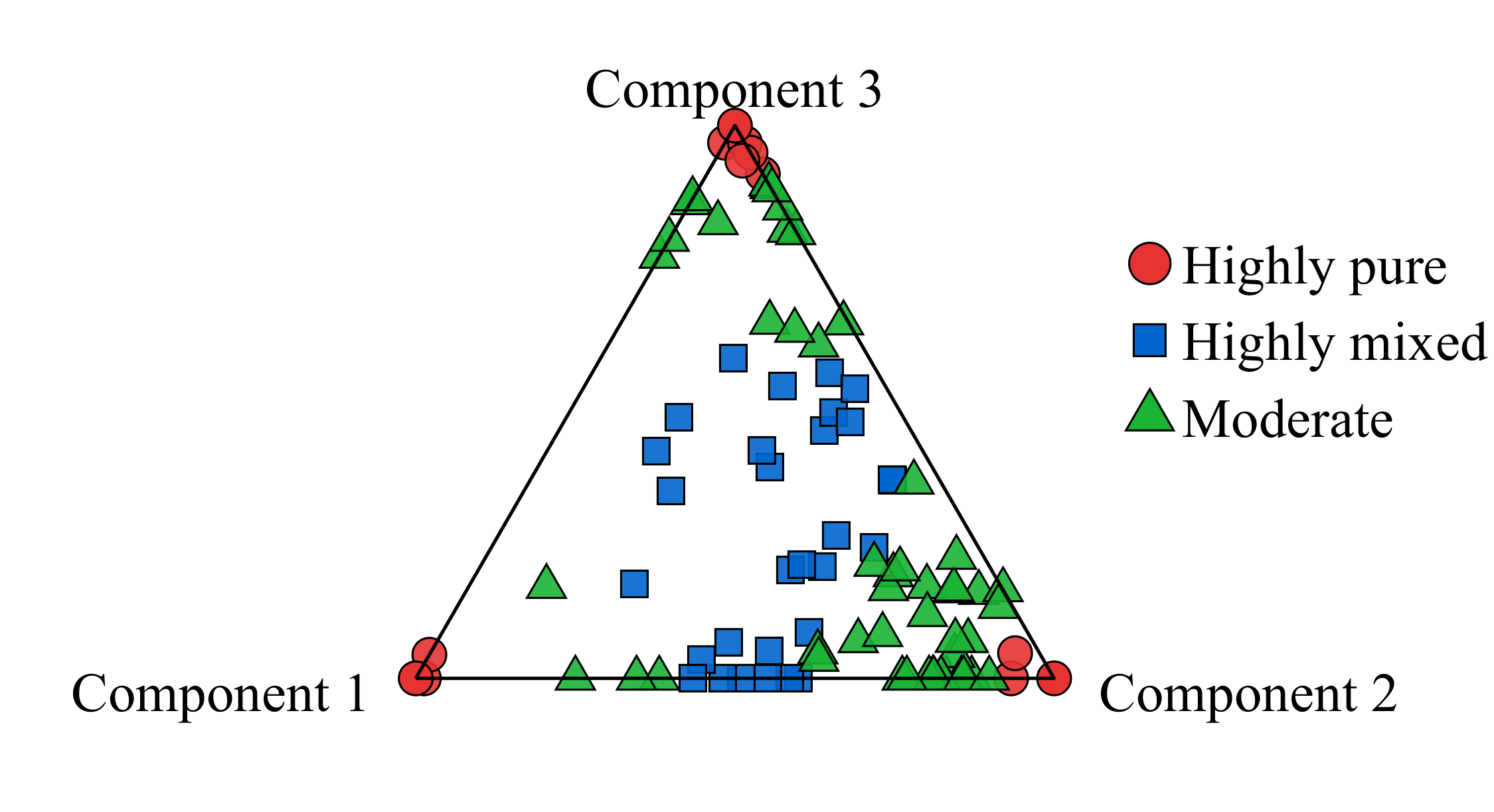}}
}
\caption{Ternary plots of estimated membership vectors for the Iris data (left) and the Wine data (right). Colours indicate the home base component.}
\label{fig:ternary_iris_wine}
\end{figure}
\section{Conclusion}\label{sec:conclusion}

In this paper, we have proposed a novel and interpretable statistical model, the mixed membership sub-Gaussian model (MMSG). Unlike the classical Gaussian mixture model, which forces each observation to belong to exactly one component, the MMSG model allows an observation to belong to multiple components simultaneously, with a membership vector that sums to one. This flexibility makes the model suitable for a wide range of applications where mixed membership is natural, such as in genetics, social sciences, and text analysis. For this model, we developed an efficient spectral algorithm, called SPG, to estimate the mixed memberships of individuals. The algorithm constructs the off-diagonal Gram matrix, extracts its top eigenvectors, applies the successive projection algorithm to identify pure individuals, and then recovers the mixed membership matrix via row normalization. Under mild separation conditions on the component centres, we proved that the estimation error of the per‑individual mixed membership vector can be made arbitrarily small with high probability. Extensive numerical experiments demonstrate that our method significantly outperforms existing algorithms that do not account for mixed membership. To the best of our knowledge, this is the first work to provide a computationally efficient estimator with such a vanishing‑error guarantee for a mixed‑membership extension of the Gaussian mixture model.

The present work opens up several interesting and challenging directions for future research. First, the number of components \(K\) is assumed to be known throughout this paper. Developing methods with theoretical guarantees to estimate \(K\) from data under the MMSG model is a fundamental and challenging problem that remains largely open. Second, model selection between the classical GMM and the MMSG model is an important practical question. One may design information criteria, likelihood ratio tests, or cross-validation procedures to decide whether a hard clustering or a mixed membership structure better explains a given data set. Third, it would be interesting to develop a test for whether two individuals have exactly the same mixed membership vector, which is more challenging than testing for pure membership alone. Fourth, tensor methods that exploit higher-order moments could potentially improve statistical efficiency. Fifth, robust extensions that tolerate heavy-tailed noise or adversarial outliers are worth investigating, as real data may deviate from sub-Gaussian assumptions. Sixth, scaling the SPG algorithm to very large sample sizes using randomised sketching techniques is a promising direction for big data applications. Just as the classical mixed membership stochastic blockmodel has inspired a large body of research in network analysis, we expect that the MMSG model will serve as a foundation for many future studies in mixed membership modelling for continuous data, both in theory and in practice, and this paper is not an end but rather the beginning of a rich line of research.

\section*{CRediT authorship contribution statement}
\textbf{Huan Qing} is the sole author of this article.

\section*{Declaration of competing interest}
The author declares no competing interests.

\section*{Data availability}
Data and code will be made available on request.
\appendix
\section{Technical Proofs}\label{app:proofs}
\subsection{Proof of Proposition \ref{prop:identifiability}}
\begin{proof}
The MMSG model shares exactly the same low‑rank structure $\mathbf{P} = \mathbf{\Theta}\mathbf{\Pi}^{\top}$ with the mixed membership matrix $\mathbf{\Pi}$ satisfying the pure‑individual condition as the grade‑of‑membership (GoM) model studied in \citep{chen2024spectral}. Theorem 2 of \citep{chen2024spectral} establishes the above identifiability statements for the GoM model. The proof of that theorem uses only the factorization $\mathbf{P} = \mathbf{\Theta}\mathbf{\Pi}^{\top}$, the row‑sum‑to‑one and non‑negativity constraints on $\mathbf{\Pi}$, and the existence of pure individuals.  It never requires the entries of $\mathbf{\Theta}$ to lie in $[0,1]$.  
Therefore, the same conclusions hold for the MMSG model without any modification. We omit the repetitive details and refer the reader to the original proof in \citep{chen2024spectral} for a similar argument.
\end{proof}
\subsection{Proof of Lemma \ref{lem:simplex}}
\begin{proof}
Without loss of generality, suppose that there exists a set $\mathcal{I}\subset[n]$ of pure individuals such that $\mathbf{\Pi}(\mathcal{I},:) = \mathbf{I}_K$ after a suitable permutation of the components.

Take the compact singular value decomposition $\mathbf{P} = \mathbf{V}\mathbf{\Sigma}\mathbf{U}^{\top}$, where $\mathbf{U}^{\top}\mathbf{U} = \mathbf{I}_K$, $\mathbf{V}^{\top}\mathbf{V} = \mathbf{I}_K$, and $\mathbf{\Sigma} = \operatorname{diag}(\sigma_1,\sigma_2,\dots,\sigma_K)$ with $\sigma_1\ge\sigma_2\ge\cdots\ge\sigma_K>0$. Then $\mathbf{U} = \mathbf{P}^{\top}\mathbf{V}\mathbf{\Sigma}^{-1}$. Since $\mathbf{P}^{\top} = \mathbf{\Pi}\mathbf{\Theta}^{\top}$, we obtain
\[
\mathbf{U} = \mathbf{\Pi}\mathbf{\Theta}^{\top}\mathbf{V}\mathbf{\Sigma}^{-1}.
\]

Define $\mathbf{B} = \mathbf{\Theta}^{\top}\mathbf{V}\mathbf{\Sigma}^{-1}$. Because $\operatorname{rank}(\mathbf{\Theta}) = K$ and $\mathbf{V}$ shares the same column space as $\mathbf{\Theta}$ (since $\operatorname{col}(\mathbf{P}) = \operatorname{col}(\mathbf{\Theta})$), there exists an invertible matrix $\mathbf{R}$ such that $\mathbf{\Theta} = \mathbf{V}\mathbf{R}$. Consequently, $\mathbf{\Theta}^{\top}\mathbf{V} = \mathbf{R}^{\top}$ is invertible. Hence $\mathbf{B}$ is invertible, and we have $\mathbf{U} = \mathbf{\Pi}\mathbf{B}$.

For the pure individuals, using $\mathbf{\Pi}(\mathcal{I},:) = \mathbf{I}_K$ gives $\mathbf{U}(\mathcal{I},:) = \mathbf{\Pi}(\mathcal{I},:)\mathbf{B} = \mathbf{B}$. Thus $\mathbf{B} = \mathbf{U}(\mathcal{I},:)$. Finally, for any $i\in[n]$, we have
\[
\mathbf{U}_{i,:} = \boldsymbol{\pi}_i^{\top}\mathbf{B} = \sum_{k=1}^{K} \pi_{ik}\mathbf{B}_{k,:},
\]
where $\pi_{ik}\ge0$ and $\sum_{k=1}^{K}\pi_{ik}=1$, so each row of $\mathbf{U}$ is a convex combination of the vertex rows $\mathbf{B}_{1,:},\mathbf{B}_{2,:},\dots,\mathbf{B}_{K,:}$. 
\end{proof}

\subsection{Proof of Theorem \ref{thm:main}}
\begin{proof}
When the following conditions hold:
\begin{align}
& np \gg \mu^2 \kappa_{\mathbf{P}}^8 K^2 \log^4 d,\label{cai1}\\
& p \gg \mu_1 \kappa_{\mathbf{P}}^8 K \log^2 d,\label{cai2}\\
& \frac{\eta}{\sigma_K(\mathbf{P})} \ll \min\left\{ \frac{1}{\kappa_{\mathbf{P}}\sqrt[4]{np}\sqrt{\log d}},\ \frac{1}{\kappa_{\mathbf{P}}^3\sqrt{n\log d}} \right\},\label{cai3}\\
& n \gg \mu_1\kappa_{\mathbf{P}}^4K,\label{cai4}
\end{align}
Theorem~1 in \citep{cai2021subspace} with sampling rate $p_{\text{samp}}=1$ guarantees that there exists an orthogonal matrix $\mathcal{O}\in\mathbb{R}^{K\times K}$ such that with probability at least $1-O(d_{\max}^{-10})$,
\begin{align*}
\|\hat{\mathbf{U}}\mathcal{O} - \mathbf{U}\|_{2,\infty} \preceq\kappa_{\mathbf{P}}^2 \sqrt{\frac{\mu K}{n}} \; \mathcal{E}_{\text{general}},
\end{align*}
where
\begin{align*}
\mathcal{E}_{\text{general}} = \frac{\mu_1 \kappa_{\mathbf{P}}^2 K}{n} 
+ \frac{\eta \kappa_{\mathbf{P}}}{\sigma_K(\mathbf{P})}\sqrt{n\log d} 
+ \frac{\eta^2}{\sigma^2_K(\mathbf{P})}\sqrt{np}\log d.
\end{align*}

By Lemmas \ref{lem:sigmaP}, \ref{lem:kappaP}, and \ref{lem:mu1_bound}, we know that $\sigma_K(\mathbf{P}) \ge \frac{\Delta}{\sqrt{2}\kappa}\,\sigma_{K}(\mathbf{\Pi}), \kappa_{\mathbf{P}} \le \kappa\kappa_{\mathbf{\Pi}}$, and $\mu_1\le \frac{1}{\beta}$. Therefore, as long as the dimensions $n, p$, and the separation parameter $\Delta$ satisfy 

\begin{align*}
np\gg \mu^2\kappa^8\kappa^8_{\mathbf{\Pi}}K^2\log^4 d,~~ p\gg \frac{\kappa^8\kappa^8_{\mathbf{\Pi}}}{\beta}K\log^2 d,~~ n\gg \frac{\kappa^4\kappa^4_{\mathbf{\Pi}}}{\beta}K,~~\Delta\gg \frac{\eta\kappa^2\kappa_{\mathbf{\Pi}}}{\sqrt{\beta}}(\frac{p}{n})^{\frac{1}{4}}\sqrt{K\log d},~~\Delta\gg\frac{\eta\kappa^4\kappa^3_{\mathbf{\Pi}}}{\sqrt{\beta}}\sqrt{K\log d},
\end{align*}
the conditions in Equations (\ref{cai1})-(\ref{cai4}) hold naturally. Since $\hat{\mathbf{U}}'\hat{\mathbf{U}}=\mathbf{I}_{K},\mathbf{U}'\mathbf{U}=\mathbf{I}_{K}$, by basic algebra, we have $\|\hat{\mathbf{U}}\hat{\mathbf{U}}'-\mathbf{U}\mathbf{U}'\|_{2\rightarrow\infty}\leq2\|\hat{\mathbf{U}}\mathcal{O}-\mathbf{U}\|_{2\rightarrow\infty}$. Set $\varpi=\|\hat{\mathbf{U}}\hat{\mathbf{U}}'-\mathbf{U}\mathbf{U}'\|_{2\rightarrow\infty}$. Since both SPG and the SPACL algorithm without the prune step of \citep{mao2021estimating} apply the SPA algorithm to an eigenvector matrix to hunt for pure individuals and then estimate the mixed memberships, the proof of SPG's estimation error is the same as SPACL. By Lemma Equation (3) in Theorem 3.2 of \citep{mao2021estimating}, there exists a $K\times K$ permutation matrix $\mathcal{P}$ such that
\begin{align*}	
\max_{i\in[n]} \|\hat{\mathbf{\Pi}}_{i,:} - (\mathbf{\Pi}\mathcal{P})_{i,:}\|_1 =O\Bigg(\varpi \kappa(\mathbf{\Pi}'\mathbf{\Pi})\sqrt{\lambda_{1}(\mathbf{\Pi}'\mathbf{\Pi})}\Bigg)=O\Bigg( \kappa_{\mathbf{P}}^2 \kappa^2_{\mathbf{\Pi}}\sigma_1(\mathbf{\Pi})\sqrt{\frac{\mu K}{n}}\mathcal{E}_{\text{general}}\Bigg).
\end{align*}

Since $\sigma_K(\mathbf{P}) \ge \frac{\Delta}{\sqrt{2}\kappa}\,\sigma_{K}(\mathbf{\Pi}), \kappa_{\mathbf{P}} \le \kappa\kappa_{\mathbf{\Pi}}$, and $\mu_1\le \frac{1}{\beta}$, we get
\begin{align*}
\mathcal{E}_{\text{general}}&=\frac{\mu_1 \kappa_{\mathbf{P}}^2 K}{n} 
+ \frac{\eta \kappa_{\mathbf{P}}}{\sigma_K(\mathbf{P})}\sqrt{n\log d} 
+ \frac{\eta^2}{\sigma^2_K(\mathbf{P})}\sqrt{np}\log d\\
&\leq\frac{\kappa^2\kappa^2_{\mathbf{\Pi}}K}{\beta n} 
+ \frac{\eta \kappa\kappa_{\mathbf{\Pi}}}{\Delta\sigma_K(\mathbf{\Pi})}\sqrt{2\kappa n\log d} 
+ \frac{2\kappa^2\eta^2}{\Delta^2\sigma^2_K(\mathbf{\Pi})}\sqrt{np}\log d\\
&=\frac{\kappa^2\kappa^2_{\mathbf{\Pi}}K}{\beta n} 
+ \frac{\eta \kappa\kappa_{\mathbf{\Pi}}}{\Delta\sqrt{\beta}}\sqrt{2\kappa K\log d} 
+ \frac{2\kappa^2\eta^2K}{\Delta^2\beta}\sqrt{\frac{p}{n}}\log d,
\end{align*}
which gives
\begin{align}\label{el1}
\max_{i\in[n]} \|\hat{\mathbf{\Pi}}_{i,:} - (\mathbf{\Pi}\mathcal{P})_{i,:}\|_1 
= O\Bigg( 
\kappa^{2} \kappa_{\mathbf{\Pi}}^4\sigma_1(\mathbf{\Pi})\sqrt{\frac{\mu K}{n}} 
\Bigg(\frac{\kappa^2\kappa^2_{\mathbf{\Pi}}K}{\beta n} 
+ \frac{\eta \kappa\kappa_{\mathbf{\Pi}}}{\Delta\sqrt{\beta}}\sqrt{2\kappa K\log d} 
+ \frac{2\kappa^2\eta^2K}{\Delta^2\beta}\sqrt{\frac{p}{n}}\log d
\Bigg) 
\Bigg).
\end{align}

When $n\gg\frac{\mu^{\frac{1}{3}}\kappa^{\frac{8}{3}}\kappa^4_{\mathbf{\Pi}}}{\beta^{\frac{2}{3}}}K\sigma^{\frac{2}{3}}_1(\mathbf{\Pi})$, $\Delta\gg\frac{\eta\kappa^{\frac{7}{2}}\kappa^5_{\mathbf{\Pi}}\mu^{\frac{1}{2}}}{\beta^{\frac{1}{2}}}\frac{K\sigma_{1}(\mathbf{\Pi})\sqrt{\log d}}{\sqrt{n}}$, and $\Delta\gg \frac{\eta\kappa^2\kappa^2_{\mathbf{\Pi}}\mu^{\frac{1}{4}}}{\beta^{\frac{1}{2}}}\sqrt{\sigma_{1}(\mathbf{\Pi})\sqrt{\frac{K}{n}}}(\frac{p}{n})^{\frac{1}{4}}\sqrt{K\log d}$,  we have $\max_{i\in[n]} \|\hat{\mathbf{\Pi}}_{i,:} - (\mathbf{\Pi}\mathcal{P})_{i,:}\|_1 =o(1)$ in Equation (\ref{el1}).
\end{proof}

\subsection{Proof of Corollary~\ref{cor:simplified}}
\begin{proof}
We start from the sufficient conditions in Theorem~\ref{thm:main} and substitute the boundedness assumptions \(\kappa=O(1)\), \(\beta=O(1)\), \(\kappa_{\mathbf{\Pi}}=O(1)\), \(\eta=O(1)\), \(\mu=O(1)\).  Recall that \(\beta = \sigma_K^2(\mathbf{\Pi})/(n/K)\), so \(\beta=O(1)\) implies \(\sigma_K^2(\mathbf{\Pi}) = O(n/K)\).  Because \(\kappa_{\mathbf{\Pi}} = \sigma_1(\mathbf{\Pi})/\sigma_K(\mathbf{\Pi}) = O(1)\), there exists an absolute constant \(c>0\) such that
\[
\sigma_1(\mathbf{\Pi}) \le c\,\sigma_K(\mathbf{\Pi}) = O\!\left(\sqrt{n/K}\right).
\]

First, the three conditions in Equation \eqref{datedimnpK} become:
\begin{align*}
np &\gg \mu^2\kappa^8\kappa_{\mathbf{\Pi}}^8 K^2\log^4 d = O(1)\cdot K^2\log^4 d,\\
p &\gg \frac{\kappa^8\kappa_{\mathbf{\Pi}}^8}{\beta} K\log^2 d = O(1)\cdot K\log^2 d,\\
n &\gg \frac{\mu^{1/3}\kappa^{8/3}\kappa_{\mathbf{\Pi}}^4}{\beta^{2/3}}\,K\,\sigma_1^{2/3}(\mathbf{\Pi}).
\end{align*}

In the last line, using \(\sigma_1^{2/3}(\mathbf{\Pi}) \le c^{2/3} (n/K)^{1/3}\), the right‑hand side is bounded above by an absolute constant times \(K\cdot (n/K)^{1/3}=K^{2/3}n^{1/3}\).  Hence the condition \(n \gg K^{2/3}n^{1/3}\) is equivalent to \(n^{2/3} \gg K^{2/3}\), i.e. \(n \gg K\).  Absorbing all universal constants into the \(\gg\) notation yields
\[
np \gg K^2\log^4 d,\qquad p \gg K\log^2 d,\qquad n \gg K.
\]

Second, the two conditions in Equation (\eqref{sepCond}) are:
\begin{align*}
\Delta &\gg \frac{\eta\kappa^{7/2}\kappa_{\mathbf{\Pi}}^5\mu^{1/2}}{\beta^{1/2}}\;\frac{K\sigma_1(\mathbf{\Pi})\sqrt{\log d}}{\sqrt{n}},\\
\Delta &\gg \frac{\eta\kappa^2\kappa_{\mathbf{\Pi}}^2\mu^{1/4}}{\beta^{1/2}}\;
\sqrt{\sigma_1(\mathbf{\Pi})\sqrt{K/n}}\;\bigl(p/n\bigr)^{1/4}\sqrt{K\log d}.
\end{align*}

Plugging the boundedness assumptions and \(\sigma_1(\mathbf{\Pi}) \le c\sqrt{n/K}\) into the first inequality gives
\[
\Delta \gg O(1)\cdot \frac{K\cdot\sqrt{n/K}\cdot\sqrt{\log d}}{\sqrt{n}} = O(1)\cdot\sqrt{K\log d},
\]
so \(\Delta \gg \sqrt{K\log d}\).  For the second inequality, note that
\[
\sigma_1(\mathbf{\Pi})\sqrt{K/n} \le c\sqrt{n/K}\cdot\sqrt{K/n}=c=O(1),
\]
hence the square root factor \(\sqrt{\sigma_1(\mathbf{\Pi})\sqrt{K/n}}\) is also \(O(1)\).  Consequently, we have
\[
\Delta \gg O(1)\cdot (p/n)^{1/4}\sqrt{K\log d}.
\]

Combining the two lower bounds, we obtain the more stringent requirement
\[
\Delta \gg \sqrt{K\log d}\;\max\!\left\{1,\;(p/n)^{1/4}\right\}.
\]

All omitted constants are absolute and independent of \(n,p,K,d,\eta,\Delta\).  Therefore, under the stated boundedness assumptions, the conditions of Theorem~\ref{thm:main} imply the simplified conditions \eqref{eq:simplified_dim}–\eqref{eq:simplified_sep}, and the conclusion of the theorem remains valid with probability at least \(1-O(d^{-10})\).  This completes the proof.
\end{proof}
\section{Technical Lemmas}
\begin{lem}\label{lem:sigmaP}
Under MMSG, we have
\begin{align*}
\sigma_K(\mathbf{P}) \ge \frac{\Delta}{\sqrt{2}\kappa}\,\sigma_{K}(\mathbf{\Pi}).
\end{align*}
\end{lem}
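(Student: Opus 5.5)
The plan is to split the bound into two factors, $\sigma_K(\mathbf{P}) \ge \sigma_K(\mathbf{\Theta})\,\sigma_K(\mathbf{\Pi})$ and $\sigma_K(\mathbf{\Theta}) \ge \Delta/(\sqrt{2}\kappa)$, and prove each separately. Throughout I use $\operatorname{rank}(\mathbf{\Theta})=K$, as assumed after Proposition~\ref{prop:identifiability}, and $\operatorname{rank}(\mathbf{\Pi})=K$, which follows from the pure individual condition $\mathbf{\Pi}(\mathcal{I},:)=\mathbf{I}_K$.

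\textbf{Step 1 (product bound).} Since $\mathbf{P}=\mathbf{\Theta}\mathbf{\Pi}^{\top}$ has rank $K$, its $K$-th singular value is characterized variationally on the row space of $\mathbf{P}$, which equals the column space of $\mathbf{\Pi}$. For any unit $\mathbf{x}$ in the column space of $\mathbf{\Pi}$ we have
\[
\|\mathbf{\Theta}\mathbf{\Pi}^{\top}\mathbf{x}\| \ge \sigma_K(\mathbf{\Theta})\|\mathbf{\Pi}^{\top}\mathbf{x}\| \ge \sigma_K(\mathbf{\Theta})\,\sigma_K(\mathbf{\Pi}).
\]
The first inequality holds because $\mathbf{\Theta}$ has full column rank $K$. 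The second holds because $\mathbf{x}$ lies in the row space of $\mathbf{\Pi}^{\top}$. By Courant--Fischer this gives $\sigma_K(\mathbf{P})\ge\sigma_K(\mathbf{\Theta})\sigma_K(\mathbf{\Pi})$.

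\textbf{Step 2 (lower bound on $\sigma_K(\mathbf{\Theta})$).} By the definition of $\kappa$, we have $\sigma_K(\mathbf{\Theta})=\sigma_1(\mathbf{\Theta})/\kappa$. It therefore suffices to show $\sigma_1(\mathbf{\Theta})\ge\Delta/\sqrt{2}$. Pick $k\neq\ell$ attaining the minimum in the definition of $\Delta$. Then
\[
\Delta=\|\boldsymbol{\theta}_k-\boldsymbol{\theta}_\ell\|=\|\mathbf{\Theta}(\mathbf{e}_k-\mathbf{e}_\ell)\|\le \sigma_1(\mathbf{\Theta})\,\|\mathbf{e}_k-\mathbf{e}_\ell\|=\sqrt{2}\,\sigma_1(\mathbf{\Theta}).
\]
Combining this with Step 1 yields $\sigma_K(\mathbf{P})\ge \frac{\Delta}{\sqrt{2}\kappa}\sigma_K(\mathbf{\Pi})$, which is the claim.

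\textbf{Main obstacle.} The only step needing care is Step 1. The inequality $\sigma_K(\mathbf{A}\mathbf{B})\ge\sigma_K(\mathbf{A})\sigma_K(\mathbf{B})$ is valid here only because the inner dimension equals $K$ and both factors have full rank $K$, so the restriction to the correct $K$-dimensional subspace must be made explicit. A further minor point is that $\Delta$ is defined only when $K\ge2$. For $K=1$ the minimum is over an empty set, and one adopts the convention that the statement is vacuous or that $\Delta$ is interpreted as finite, so I would state $K\ge 2$ explicitly.
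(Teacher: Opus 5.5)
Your proof is correct and follows the paper's argument: the product bound $\sigma_K(\mathbf{\Theta}\mathbf{\Pi}^{\top})\ge\sigma_K(\mathbf{\Theta})\,\sigma_K(\mathbf{\Pi})$, combined with $\sigma_1(\mathbf{\Theta})\ge\Delta/\sqrt{2}$ from $\|\mathbf{e}_k-\mathbf{e}_\ell\|=\sqrt{2}$. Your Courant--Fischer argument on the column space of $\mathbf{\Pi}$ spells out the product step that the paper only cites as a standard inequality. Full rank is not actually required for that inequality (an inner dimension of $K$ suffices, and rank deficiency only makes it trivial), and your $K\ge 2$ caveat is a fair point that the paper leaves implicit.
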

\begin{proof}[Proof of Lemma \ref{lem:sigmaP}]
Because $\mathbf{\Pi}^\top$ is row-full-rank, we can apply the multiplicative singular value inequality for matrices with full column/row rank:
\[
\sigma_K(\mathbf{\Theta}\mathbf{\Pi}^\top) \ge \sigma_K(\mathbf{\Theta})\,\sigma_K(\mathbf{\Pi}^\top).
\]

Now we bound $\sigma_K(\mathbf{\Theta})$ from below. For any distinct $k,\ell\in[K]$, we have
\[
\Delta \le \|\boldsymbol{\theta}_k-\boldsymbol{\theta}_\ell\| 
= \|\mathbf{\Theta}(\mathbf{e}_k-\mathbf{e}_\ell)\| 
\le \sigma_1(\mathbf{\Theta})\,\|\mathbf{e}_k-\mathbf{e}_\ell\| = \sqrt{2}\,\sigma_1(\mathbf{\Theta}),
\]
hence $\sigma_1(\mathbf{\Theta}) \ge \Delta/\sqrt{2}$. By definition $\kappa = \sigma_1(\mathbf{\Theta})/\sigma_K(\mathbf{\Theta})$, we have $\sigma_K(\mathbf{\Theta}) = \sigma_1(\mathbf{\Theta})/\kappa \ge \Delta/(\sqrt{2}\kappa)$. Substituting this into the previous inequality gives
\[
\sigma_K(\mathbf{P}) \ge \frac{\Delta}{\sqrt{2}\kappa}\,\sigma_{K}(\mathbf{\Pi}),
\]
which completes the proof.
\end{proof}

\begin{lem}\label{lem:kappaP}
Under MMSG, let $\mathbf{P} = \mathbf{\Theta}\mathbf{\Pi}^{\top}$. Define $\kappa = \sigma_1(\mathbf{\Theta})/\sigma_K(\mathbf{\Theta}), \kappa_{\mathbf{P}} = \sigma_1(\mathbf{P})/\sigma_K(\mathbf{P})$, and $\kappa_{\mathbf{\Pi}} = \sigma_1(\mathbf{\Pi})/\sigma_K(\mathbf{\Pi})$. Then, we have
\begin{align}
\kappa_{\mathbf{P}} \le \kappa\kappa_{\mathbf{\Pi}}. \label{eq:kappaP_bound}
\end{align}
\end{lem}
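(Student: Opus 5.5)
The plan is to bound the numerator and the denominator of $\kappa_{\mathbf{P}}=\sigma_1(\mathbf{P})/\sigma_K(\mathbf{P})$ separately, each by a product of singular values of the two factors of $\mathbf{P}=\mathbf{\Theta}\mathbf{\Pi}^{\top}$.

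For the numerator, submultiplicativity of the spectral norm gives
\[
\sigma_1(\mathbf{P})=\|\mathbf{\Theta}\mathbf{\Pi}^{\top}\|\le\|\mathbf{\Theta}\|\,\|\mathbf{\Pi}^{\top}\|=\sigma_1(\mathbf{\Theta})\,\sigma_1(\mathbf{\Pi}).
\]

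For the denominator, I will use the same multiplicative lower bound already invoked in the proof of Lemma~\ref{lem:sigmaP}, namely $\sigma_K(\mathbf{\Theta}\mathbf{\Pi}^{\top})\ge\sigma_K(\mathbf{\Theta})\,\sigma_K(\mathbf{\Pi})$. To justify it directly, take any unit vector $\mathbf{x}\in\mathbb{R}^n$ in the row space of $\mathbf{P}$, which equals the column space of $\mathbf{\Pi}$ because $\mathbf{\Theta}$ has full column rank $K$.
\begin{itemize}
\item Since $\mathbf{x}$ lies in the column space of $\mathbf{\Pi}$, we get $\|\mathbf{\Pi}^{\top}\mathbf{x}\|\ge\sigma_K(\mathbf{\Pi})$.
\item Since $\operatorname{rank}(\mathbf{\Theta})=K$, we get $\|\mathbf{\Theta}\mathbf{y}\|\ge\sigma_K(\mathbf{\Theta})\|\mathbf{y}\|$ for every $\mathbf{y}\in\mathbb{R}^K$.
\end{itemize}
Combining these, $\|\mathbf{P}\mathbf{x}\|\ge\sigma_K(\mathbf{\Theta})\sigma_K(\mathbf{\Pi})$. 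Because $\mathbf{P}$ has rank exactly $K$, its $K$-th singular value is the minimum of $\|\mathbf{P}\mathbf{x}\|$ over unit vectors $\mathbf{x}$ in its $K$-dimensional row space, so the bound transfers to $\sigma_K(\mathbf{P})$.

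Dividing the numerator bound by the denominator bound gives
\[
\kappa_{\mathbf{P}}\le\frac{\sigma_1(\mathbf{\Theta})\,\sigma_1(\mathbf{\Pi})}{\sigma_K(\mathbf{\Theta})\,\sigma_K(\mathbf{\Pi})}=\kappa\,\kappa_{\mathbf{\Pi}}.
\]

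The only step needing care is the lower bound on $\sigma_K(\mathbf{P})$. It requires both factors to have rank $K$: $\operatorname{rank}(\mathbf{\Theta})=K$ is assumed throughout the paper, and $\operatorname{rank}(\mathbf{\Pi})=K$ follows from the pure-individual condition, since $\mathbf{\Pi}(\mathcal{I},:)=\mathbf{I}_K$. Everything else is routine.
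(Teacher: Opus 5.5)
Your proof is correct and follows the paper's argument: you bound $\sigma_1(\mathbf{P})\le\sigma_1(\mathbf{\Theta})\sigma_1(\mathbf{\Pi})$ and $\sigma_K(\mathbf{P})\ge\sigma_K(\mathbf{\Theta})\sigma_K(\mathbf{\Pi})$, using that $\mathbf{\Pi}$ has full column rank by the pure-individual condition, and then divide. The only difference is that you prove the multiplicative lower bound directly, via the variational characterization of $\sigma_K$ over the row space of $\mathbf{P}$, whereas the paper quotes it as a standard singular value inequality.
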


\begin{proof}[Proof of Lemma \ref{lem:kappaP}]
The pure-individual condition guarantees that $\mathbf{\Pi}$ contains a $K\times K$ identity submatrix; since $K\le n$ by the model definition, $\mathbf{\Pi}$ has full column rank $K$.  

For any matrices $\mathbf{A}\in\mathbb{R}^{p\times K}$ and $\mathbf{B}\in\mathbb{R}^{n\times K}$ with $\mathbf{B}$ having full column rank, the singular values satisfy
\begin{align}
\sigma_1(\mathbf{A}\mathbf{B}^{\top}) &\le \sigma_1(\mathbf{A})\,\sigma_1(\mathbf{B}), \label{eq:svd_upper}\\
\sigma_K(\mathbf{A}\mathbf{B}^{\top}) &\ge \sigma_K(\mathbf{A})\,\sigma_K(\mathbf{B}). \label{eq:svd_lower}
\end{align}

Applying Equations \eqref{eq:svd_upper} and \eqref{eq:svd_lower} with $\mathbf{A} = \mathbf{\Theta}$ and $\mathbf{B} = \mathbf{\Pi}$ yields
\begin{align}
\sigma_1(\mathbf{P}) &\le \sigma_1(\mathbf{\Theta})\,\sigma_1(\mathbf{\Pi}), \label{eq:P_upper}\\
\sigma_K(\mathbf{P}) &\ge \sigma_K(\mathbf{\Theta})\,\sigma_K(\mathbf{\Pi}). \label{eq:P_lower}
\end{align}

Dividing Equation \eqref{eq:P_upper} by Equation \eqref{eq:P_lower} gives
\begin{align}
\frac{\sigma_1(\mathbf{P})}{\sigma_K(\mathbf{P})}
\le \frac{\sigma_1(\mathbf{\Theta})}{\sigma_K(\mathbf{\Theta})}
\cdot \frac{\sigma_1(\mathbf{\Pi})}{\sigma_K(\mathbf{\Pi})}
= \kappa\kappa_{\mathbf{\Pi}},
\end{align}
which is exactly Equation \eqref{eq:kappaP_bound}. This completes the proof.
\end{proof}

\begin{lem}\label{lem:mu1_bound}
Under MMSG, let $\mathbf{U}$ be the right singular vectors of $\mathbf{P}=\mathbf{\Theta}\mathbf{\Pi}^{\top}$, i.e., $\mathbf{P}=\mathbf{V}\mathbf{\Sigma}\mathbf{U}^{\top}$ with $\mathbf{U}^{\top}\mathbf{U}=\mathbf{I}_K$. Then
\begin{align*}
\max_{i\in[n]} \|\mathbf{U}_{i,:}\|^2 \le \frac{1}{\lambda_K(\mathbf{\Pi}^{\top}\mathbf{\Pi})},
\end{align*}
where $\lambda_K(\cdot)$ denotes the smallest eigenvalue. Consequently, we have
\begin{align*}
\mu_1 := \frac{n}{K}\max_{i\in[n]} \|\mathbf{U}_{i,:}\|^2 \le \frac{1}{\beta}.
\end{align*}
\end{lem}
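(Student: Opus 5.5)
The plan is to use the factorization from Lemma~\ref{lem:simplex}, namely \(\mathbf{U}=\mathbf{\Pi}\mathbf{B}\) with \(\mathbf{B}\) invertible. This turns the bound on rows of \(\mathbf{U}\) into a bound on \(\mathbf{B}\). Since \(\mathbf{U}^{\top}\mathbf{U}=\mathbf{I}_K\), substituting gives
\[
\mathbf{B}^{\top}\mathbf{\Pi}^{\top}\mathbf{\Pi}\mathbf{B}=\mathbf{I}_K .
\]
Because \(\mathbf{\Pi}\) has full column rank (it contains \(\mathbf{I}_K\) as a submatrix), \(\mathbf{\Pi}^{\top}\mathbf{\Pi}\) is positive definite. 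Hence \(\mathbf{B}\mathbf{B}^{\top}=(\mathbf{\Pi}^{\top}\mathbf{\Pi})^{-1}\). This follows by inverting the identity above: \(\mathbf{B}^{-1}(\mathbf{\Pi}^{\top}\mathbf{\Pi})^{-1}\mathbf{B}^{-\top}=\mathbf{I}_K\).

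Next, for each \(i\in[n]\) I write
\[
\|\mathbf{U}_{i,:}\|^2=\boldsymbol{\pi}_i^{\top}\mathbf{B}\mathbf{B}^{\top}\boldsymbol{\pi}_i=\boldsymbol{\pi}_i^{\top}(\mathbf{\Pi}^{\top}\mathbf{\Pi})^{-1}\boldsymbol{\pi}_i .
\]
This quadratic form is at most \(\|\boldsymbol{\pi}_i\|^2/\lambda_K(\mathbf{\Pi}^{\top}\mathbf{\Pi})\). The membership vector \(\boldsymbol{\pi}_i\) is nonnegative with entries summing to one, so \(\|\boldsymbol{\pi}_i\|_2\le\|\boldsymbol{\pi}_i\|_1=1\). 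Taking the maximum over \(i\) gives the first claim. An equivalent route that avoids \(\mathbf{B}\) altogether is to note that \(\mathbf{U}\mathbf{U}^{\top}\) is the projection onto \(\operatorname{col}(\mathbf{\Pi})\), which equals \(\mathbf{\Pi}(\mathbf{\Pi}^{\top}\mathbf{\Pi})^{-1}\mathbf{\Pi}^{\top}\). The \(i\)-th diagonal entry of that projection gives the same expression. I would mention this alternative because it only uses \(\operatorname{col}(\mathbf{P}^{\top})=\operatorname{col}(\mathbf{\Pi})\), which holds since \(\operatorname{rank}(\mathbf{\Theta})=K\).

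For the consequence, I use \(\lambda_K(\mathbf{\Pi}^{\top}\mathbf{\Pi})=\sigma_K^2(\mathbf{\Pi})=\beta n/K\). Substituting, \(\mu_1=\frac{n}{K}\max_i\|\mathbf{U}_{i,:}\|^2\le \frac{n}{K}\cdot\frac{K}{\beta n}=1/\beta\).

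There is no serious obstacle in this argument. The only points needing care are the justification of \(\mathbf{B}\mathbf{B}^{\top}=(\mathbf{\Pi}^{\top}\mathbf{\Pi})^{-1}\), which requires invertibility of both \(\mathbf{B}\) and \(\mathbf{\Pi}^{\top}\mathbf{\Pi}\), and the elementary norm comparison \(\|\boldsymbol{\pi}_i\|_2\le 1\).
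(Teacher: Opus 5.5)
Your proof is correct. Its core coincides with the paper's: both start from $\mathbf{U}=\mathbf{\Pi}\mathbf{B}$ (Lemma~\ref{lem:simplex}) and use $\mathbf{U}^{\top}\mathbf{U}=\mathbf{I}_K$ to get $\mathbf{B}\mathbf{B}^{\top}=\mathbf{D}^{-1}$, where $\mathbf{D}=\mathbf{\Pi}^{\top}\mathbf{\Pi}$. The two arguments differ only in the final step.

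The paper defers to the geometric argument of Lemma 3.1 in \citep{mao2021estimating}, which relies on the simplex structure. Since $\|\cdot\|^2$ is convex, and every row of $\mathbf{U}$ is a convex combination of the vertex rows $\mathbf{B}_{k,:}$, the maximum row norm is attained at a vertex. By the pure-individual condition these vertices are themselves rows of $\mathbf{U}$. This gives the exact identity $\max_i\|\mathbf{U}_{i,:}\|^2=\max_k(\mathbf{D}^{-1})_{kk}$, and the paper then bounds a diagonal entry by $\lambda_{\max}(\mathbf{D}^{-1})=1/\lambda_K(\mathbf{D})$.

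You skip the vertex step. You bound the quadratic form directly, $\boldsymbol{\pi}_i^{\top}\mathbf{D}^{-1}\boldsymbol{\pi}_i\le\|\boldsymbol{\pi}_i\|^2/\lambda_K(\mathbf{D})$, and then use $\|\boldsymbol{\pi}_i\|_2\le\|\boldsymbol{\pi}_i\|_1=1$.

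The paper's route yields a sharper intermediate statement: it identifies the pure individuals as the rows of maximal leverage. Your route is self-contained rather than cited. In your projection form $\mathbf{U}\mathbf{U}^{\top}=\mathbf{\Pi}\mathbf{D}^{-1}\mathbf{\Pi}^{\top}$ it also does not use pure individuals at all. It needs only $\operatorname{rank}(\mathbf{\Theta})=K$ and $\sigma_K(\mathbf{\Pi})>0$, and the latter is already implicit in $\beta>0$. Both arguments reach the same bound, and hence $\mu_1\le 1/\beta$.
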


\begin{proof}[Proof of Lemma \ref{lem:mu1_bound}]
The row vectors of $\mathbf{U}$ satisfy the same simplex structure as in the mixed membership stochastic blockmodel (MMSB) \citep{mao2021estimating}.  Specifically, by Lemma~\ref{lem:simplex}, there exists an invertible matrix $\mathbf{B}$ such that $\mathbf{U}=\mathbf{\Pi}\mathbf{B}$ and $\mathbf{B}=\mathbf{U}(\mathcal{I},:)$, where $\mathcal{I}$ indexes one pure node per component.  Then, exactly the same geometric argument as in Lemma 3.1 of \citep{mao2021estimating} shows that
\[
\max_{i\in[n]}\|\mathbf{U}_{i,:}\|^2 = \max_{k\in[K]}(\mathbf{D}^{-1})_{kk} \le \lambda_{\max}(\mathbf{D}^{-1}) = \frac{1}{\lambda_K(\mathbf{D})},
\]
where $\mathbf{D}=\mathbf{\Pi}^{\top}\mathbf{\Pi}$. Hence, the claimed bound holds.  Substituting it into the definition of $\mu_1$ gives $\mu_1\le 1/\beta$ because $\beta = \sigma_K^2(\mathbf{\Pi})/(n/K) = \lambda_K(\mathbf{\Pi}^{\top}\mathbf{\Pi}) \cdot (K/n)$.  This completes the proof.
\end{proof}

\bibliographystyle{model5-names}\biboptions{authoryear}
\bibliography{refMMSG}

@article{abbe2022lp,
  title={{An lp theory of PCA and spectral clustering}},
  author={Abbe, Emmanuel and Fan, c and Wang, Kaizheng},
  journal={Annals of Statistics},
  volume={50},
  number={4},
  pages={2359--2385},
  year={2022},
  publisher={JSTOR}
}

@article{fortunato2016community,
  title={{Community detection in networks: A user guide}},
  author={Fortunato, Santo and Hric, Darko},
  journal={Physics Reports},
  volume={659},
  pages={1--44},
  year={2016},
  publisher={Elsevier}
}

@article{li2025exact,
  title={Exact recovery of community detection in k-community Gaussian mixture models},
  author={Li, Zhongyang},
  journal={European Journal of Applied Mathematics},
  volume={36},
  number={3},
  pages={491--523},
  year={2025},
  publisher={Cambridge University Press}
}

@article{jana2025adversarially,
  title={Adversarially robust clustering with optimality guarantees},
  author={Jana, Soham and Yang, Kun and Kulkarni, Sanjeev},
  journal={IEEE Transactions on Information Theory},
  year={2025},
  publisher={IEEE}
}

@article{srivastava2023robust,
  title={{A robust spectral clustering algorithm for sub-Gaussian mixture models with outliers}},
  author={Srivastava, Prateek R and Sarkar, Purnamrita and Hanasusanto, Grani A},
  journal={Operations Research},
  volume={71},
  number={1},
  pages={224--244},
  year={2023},
  publisher={INFORMS}
}

@article{chen2024achieving,
  title={Achieving optimal clustering in Gaussian mixture models with anisotropic covariance structures},
  author={Chen, Xin and Zhang, Anderson Ye},
  journal={Advances in Neural Information Processing Systems},
  volume={37},
  pages={113698--113741},
  year={2024}
}

@article{jain2010data,
  title={{Data clustering: 50 years beyond K-means}},
  author={Jain, Anil K},
  journal={Pattern Recognition Letters},
  volume={31},
  number={8},
  pages={651--666},
  year={2010},
  publisher={Elsevier}
}

@article{balakrishnan2017statistical,
  title={{Statistical guarantees for the EM algorithm: From population to sample-based analysis}},
  author={Balakrishnan, Sivaraman and Wainwright, Martin J and Yu, Bin},
  journal={Annals of Eugenics},
  volume={45},
  pages={77--120},
  year={2017}
}

@inproceedings{mcqueen1967some,
  title={Some methods of classification and analysis of multivariate observations},
  author={McQueen, James B},
  booktitle={Proc. of 5th Berkeley Symposium on Math. Stat. and Prob.},
  pages={281--297},
  year={1967}
}

@article{lu2016statistical,
  title={Statistical and computational guarantees of lloyd's algorithm and its variants},
  author={Lu, Yu and Zhou, Harrison H},
  journal={arXiv preprint arXiv:1612.02099},
  year={2016}
}

@article{von2007tutorial,
  title={A tutorial on spectral clustering},
  author={Von Luxburg, Ulrike},
  journal={Statistics and Computing},
  volume={17},
  number={4},
  pages={395--416},
  year={2007},
  publisher={Springer}
}

@article{ng2001spectral,
  title={On spectral clustering: Analysis and an algorithm},
  author={Ng, Andrew and Jordan, Michael and Weiss, Yair},
  journal={Advances in Neural Information Processing Systems},
  volume={14},
  year={2001}
}

@article{lloyd1982least,
  title={{Least squares quantization in PCM}},
  author={Lloyd, Stuart},
  journal={IEEE Transactions on Information Theory},
  volume={28},
  number={2},
  pages={129--137},
  year={1982},
  publisher={IEEE}
}

@article{pearson1894iii,
  title={{III. Contributions to the mathematical theory of evolution}},
  author={Pearson, Karl},
  journal={Proceedings of the Royal Society of London},
  volume={54},
  number={326-330},
  pages={329--333},
  year={1894},
  publisher={The Royal Society London}
}

@article{fisher1936use,
  title={The use of multiple measurements in taxonomic problems},
  author={Fisher, Ronald A},
  journal={Annals of Eugenics},
  volume={7},
  number={2},
  pages={179--188},
  year={1936},
  publisher={Wiley Online Library}
}

@article{guvenir1998learning,
  title={Learning differential diagnosis of erythemato-squamous diseases using voting feature intervals},
  author={G{\"u}venir, H Altay and Demir{\"o}z, G{\"u}l{\c{s}}en and Ilter, Nilsel},
  journal={Artificial Intelligence in Medicine},
  volume={13},
  number={3},
  pages={147--165},
  year={1998},
  publisher={Elsevier}
}

@article{aeberhard1994comparative,
  title={Comparative analysis of statistical pattern recognition methods in high dimensional settings},
  author={Aeberhard, Stefan and Coomans, Danny and De Vel, Olivier},
  journal={Pattern Recognition},
  volume={27},
  number={8},
  pages={1065--1077},
  year={1994},
  publisher={Elsevier}
}

@article{ndaoud2022sharp,
  title={{Sharp optimal recovery in the two component Gaussian mixture model}},
  author={Ndaoud, Mohamed},
  journal={Annals of Statistics},
  volume={50},
  number={4},
  pages={2096--2126},
  year={2022},
  publisher={Institute of Mathematical Statistics}
}

@article{chen2021cutoff,
  title={Cutoff for exact recovery of gaussian mixture models},
  author={Chen, Xiaohui and Yang, Yun},
  journal={IEEE Transactions on Information Theory},
  volume={67},
  number={6},
  pages={4223--4238},
  year={2021},
  publisher={IEEE}
}

@article{cai2021subspace,
title = {{Subspace estimation from unbalanced and incomplete data matrices: ${\ell _{2,\infty }}$ statistical guarantees}},
author = {Changxiao Cai and Gen Li and Yuejie Chi and H. Vincent Poor and Yuxin Chen},
journal = {Annals of Statistics},
volume = {49},
number = {2},
publisher = {Institute of Mathematical Statistics},
pages = {944 -- 967},
year = {2021},
}

@article{dempster1977maximum,
  title={{Maximum likelihood from incomplete data via the EM algorithm}},
  author={Dempster, Arthur P and Laird, Nan M and Rubin, Donald B},
  journal={Journal of the Royal Statistical Society: Series B (Methodological)},
  volume={39},
  number={1},
  pages={1--22},
  year={1977},
  publisher={Wiley Online Library}
}

@article{loffler2021optimality,
  title={{Optimality of spectral clustering in the Gaussian mixture model}},
  author={L{\"o}ffler, Matthias and Zhang, Anderson Y and Zhou, Harrison H},
  journal={Annals of Statistics},
  volume={49},
  number={5},
  pages={2506--2530},
  year={2021},
  publisher={Institute of Mathematical Statistics}
}

@article{zhang2024leave,
  title={Leave-one-out singular subspace perturbation analysis for spectral clustering},
  author={Zhang, Anderson Y and Zhou, Harrison Y},
  journal={Annals of Statistics},
  volume={52},
  number={5},
  pages={2004--2033},
  year={2024},
  publisher={Institute of Mathematical Statistics}
}

@article{qing2024grade,
  title={{Grade of Membership Analysis for Multi-Layer Ordinal Categorical Data}},
  author={Qing, Huan},
  journal={Statistica Sinica},
  volume={38},
  number={2},
  year={2024}
}

@article{qing2024finding,
  title={Finding mixed memberships in categorical data},
  author={Qing, Huan},
  journal={Information Sciences},
  volume={676},
  pages={120785},
  year={2024},
  publisher={Elsevier}
}

@article{ke2024using,
  title={{Using SVD for topic modeling}},
  author={Ke, Zheng Tracy and Wang, Minzhe},
  journal={Journal of the American Statistical Association},
  volume={119},
  number={545},
  pages={434--449},
  year={2024},
  publisher={Taylor \& Francis}
}

@article{holland1983stochastic,
  title={{Stochastic blockmodels: First steps}},
  author={Holland, Paul W and Laskey, Kathryn Blackmond and Leinhardt, Samuel},
  journal={Social Networks},
  volume={5},
  number={2},
  pages={109--137},
  year={1983},
  publisher={Elsevier}
}

@article{jin2024mixed,
  title={Mixed membership estimation for social networks},
  author={Jin, Jiashun and Ke, Zheng Tracy and Luo, Shengming},
  journal={Journal of Econometrics},
  volume={239},
  number={2},
  pages={105369},
  year={2024},
  publisher={Elsevier}
}

@article{chen2024spectral,
  title={A spectral method for identifiable grade of membership analysis with binary responses},
  author={Chen, Ling and Gu, Yuqi},
  journal={psychometrika},
  volume={89},
  number={2},
  pages={626--657},
  year={2024},
  publisher={Springer}
}

@article{airoldi2008mixed,
	title={{Mixed Membership Stochastic Blockmodel}},
	author="Edoardo M. {Airoldi} and David M. {Blei} and Stephen E. {Fienberg} and Eric P. {Xing}",
	journal="Journal of Machine Learning Research",
	volume="9",
	pages="1981--2014",
	year="2008"
}

@article{mao2021estimating,
  title={Estimating mixed memberships with sharp eigenvector deviations},
  author={Mao, Xueyu and Sarkar, Purnamrita and Chakrabarti, Deepayan},
  journal={Journal of the American Statistical Association},
  volume={116},
  number={536},
  pages={1928--1940},
  year={2021},
  publisher={Taylor \& Francis}
}

@article{gillis2013fast,
  title={Fast and robust recursive algorithmsfor separable nonnegative matrix factorization},
  author={Gillis, Nicolas and Vavasis, Stephen A},
  journal={IEEE transactions on pattern analysis and machine intelligence},
  volume={36},
  number={4},
  pages={698--714},
  year={2013},
  publisher={IEEE}
}
\end{document}